\documentclass[11pt]{article}

\usepackage[final]{acl}

\usepackage{times}
\usepackage{latexsym}

\usepackage[T1]{fontenc}
\usepackage{subcaption} 

\usepackage[utf8]{inputenc}

\usepackage{microtype}

\usepackage{inconsolata}

\usepackage{graphicx}

\usepackage{listings}

\lstdefinestyle{appendixprompt}{
  basicstyle=\ttfamily\footnotesize,
  columns=fullflexible,
  breaklines=true,
  breakatwhitespace=true,
  keepspaces=true,
  showstringspaces=false,
  frame=single,
  framerule=0.3pt,
  rulecolor=\color{black!25},
  xleftmargin=0.6em,
  xrightmargin=0.2em,
  framexleftmargin=0.3em,
  framexrightmargin=0.3em,
  aboveskip=0.6em,
  belowskip=0.6em,
  escapeinside={(*@}{@*)},
  literate={<deg>}{{\textdegree}}5
}

\usepackage{amsmath,amssymb}
\usepackage{amsthm}
\newtheorem{theorem}{Theorem}[section]

\newtheorem{proposition}[theorem]{Proposition}

\theoremstyle{remark}

\usepackage{algorithm}
\usepackage{algpseudocode}

\usepackage{booktabs} 
\usepackage[table]{xcolor}
\usepackage{multirow}

\usepackage{tikz}
\newcommand*\circled[1]{\tikz[baseline=(char.base)]{
            \node[shape=circle,draw,inner sep=1pt] (char) {#1};}}
\newcommand{\deltaacc}[1]{\textnormal{\scriptsize\,(#1)}}

\title{Revisiting Greedy Decoding for Visual Question Answering:\\A Calibration Perspective}

\author{
 \textbf{Boqi Chen\textsuperscript{1,3}\thanks{These authors contributed equally.}},
 \textbf{Xudong Liu\textsuperscript{2}\footnotemark[1]},
 \textbf{Yunke Ao\textsuperscript{1}\footnotemark[1]},
 \textbf{Jianing Qiu\textsuperscript{3}}
\\
 \textsuperscript{1}ETH Zurich,
 \textsuperscript{2}University of Toronto,
 \textsuperscript{3}MBZUAI
\\
 \small{
   \textbf{Correspondence:} \href{mailto:jianing.qiu@mbzuai.ac.ae}{jianing.qiu@mbzuai.ac.ae}
 }
}

\begin{document}
\maketitle
\begin{abstract}

Stochastic sampling strategies are widely adopted in large language models (LLMs) to balance output coherence and diversity. These heuristics are often inherited in Multimodal LLMs (MLLMs) without task-specific justification. However, we contend that stochastic decoding can be suboptimal for Visual Question Answering (VQA). VQA is a closed-ended task with head-heavy answer distributions where uncertainty is usually epistemic, arising from missing or ambiguous visual evidence rather than plausible continuations. In this work, we provide a theoretical formalization of the relationship between model calibration and predictive accuracy, and derive the sufficient conditions for greedy decoding optimality. Extensive experiments provide empirical evidence for the superiority of greedy decoding over stochastic sampling across multiple benchmarks. Furthermore, we propose Greedy Decoding for Reasoning Models, which outperforms both stochastic sampling and standard greedy decoding in multimodal reasoning scenarios. Overall, our results caution against naively inheriting LLMs decoding heuristics in MLLMs and demonstrate that greedy decoding can be an efficient yet strong default for VQA.
\end{abstract}

\section{Introduction}
\begin{figure}[t]
\centering
\includegraphics[width=0.85\linewidth]{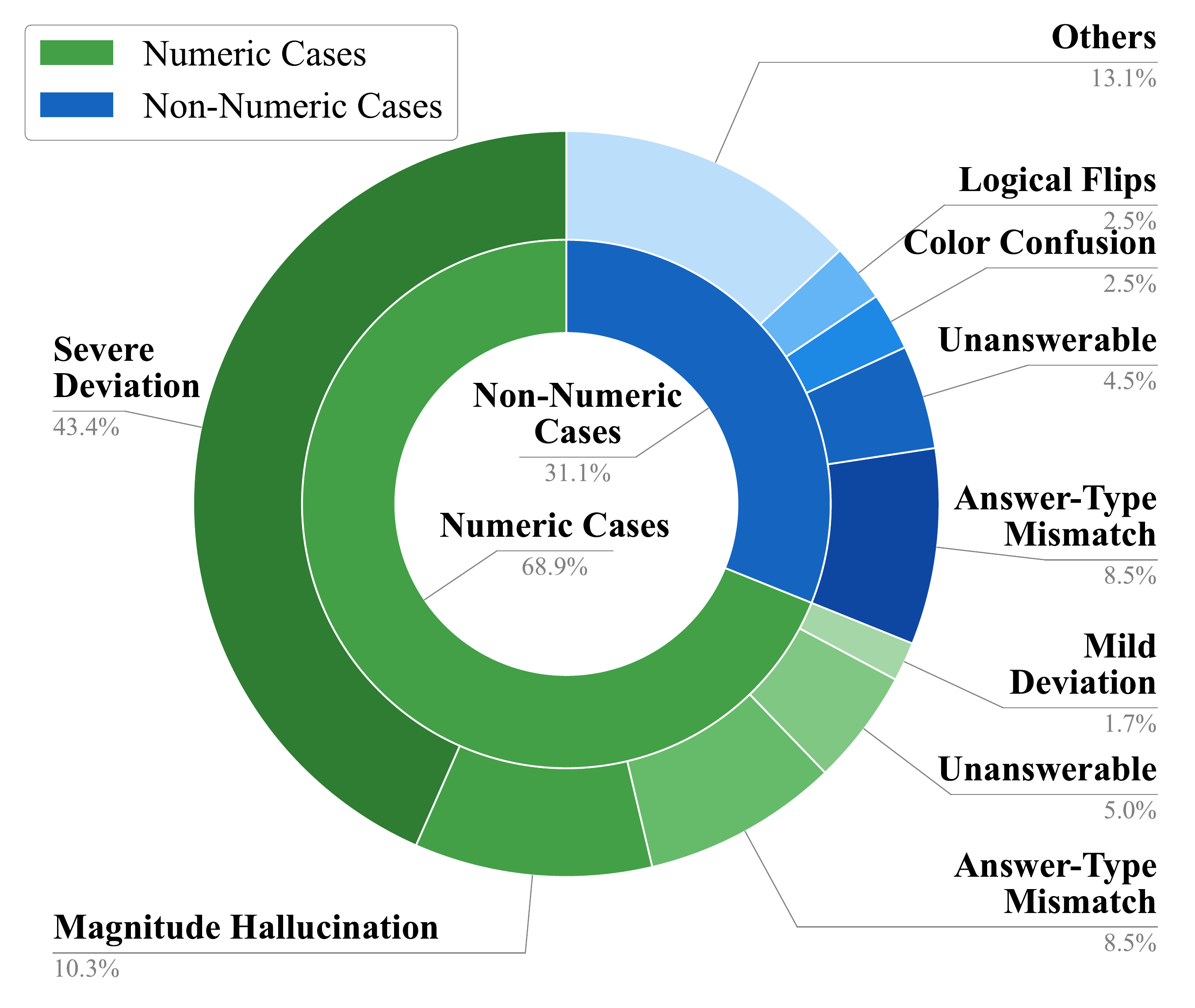}
\caption{\textbf{Failure cases of top-$k$ sampling on ChartQA.} We sample 177 instances in which greedy decoding produces the correct answer, whereas top-$k$ sampling yields an incorrect one.
}
\label{fig:failure_mode}
\end{figure}
Large Language Models (LLMs) have emerged as the standard for various generative tasks due to their ability to produce diverse and high-fidelity outputs. To navigate the inherent trade-off between coherence and diversity, stochastic sampling are widely employed during inference. Common sampling strategies achieve this balance by truncating the probability tail to prevent degeneration while maintaining output variation~\citep{fan2018hierarchical, holtzman2019curious, nguyen2024minp}. Alternatively, entropy-dependent strategies attempt to desmooth the model distribution, treating the output as a mixture of the true distribution and uniform noise to improve generation quality~\citep{hewitt2022desmoothing}.

These stochastic decoding heuristics are often adopted for Multimodal LLMs (MLLMs) without task-specific justification.
However, we contend that Visual Question Answering (VQA) differs from open-ended text generation, rendering sampling during inference suboptimal. 
First, most VQA tasks are closed-ended decision problems that require precise outputs. Therefore, the answer distribution is \emph{head-heavy}, dominated by high-frequency tokens such as numbers, colors, or boolean values~\citep{whitehead2022reliable}. 
For instance, approximately 89\% of answers VQAv2~\cite{antol2015vqa} with real MS-COCO images are single-token, with boolean responses alone constituting 38\% of the distribution. 
This head-heaviness is strong enough that a trivial ``always answer yes'' baseline reaches 29.7\% accuracy. 
Second, VQA relies on visual grounding where uncertainty is primarily \emph{epistemic} rather than aleatoric. 
In open-ended generation, uncertainty is often aleatoric, representing a choice between multiple valid creative trajectories. 
In contrast, uncertainty in VQA typically stems from missing or ambiguous visual evidence~\citep{li2023evaluating, leng2024mitigating, favero2024multi}, under which stochastic sampling risks expanding the candidate set toward low-probability and low-agreement distractors instead of grounded, valid answers. For example, the primary error types introduced by stochastic sampling on ChartQA~\cite{masry2022chartqa} are severe deviations (43.4\%) for numeric questions and answer-type mismatches (8.5\%) for non-numeric ones, whereas semantically equivalent near-misses accounts for only 2.5\% of total failures; see Figure~\ref{fig:failure_mode} and Section~\ref{sec:analytial_res} for a detailed breakdown.


\begin{figure*}[t]
\centering
\includegraphics[width=0.85\linewidth]{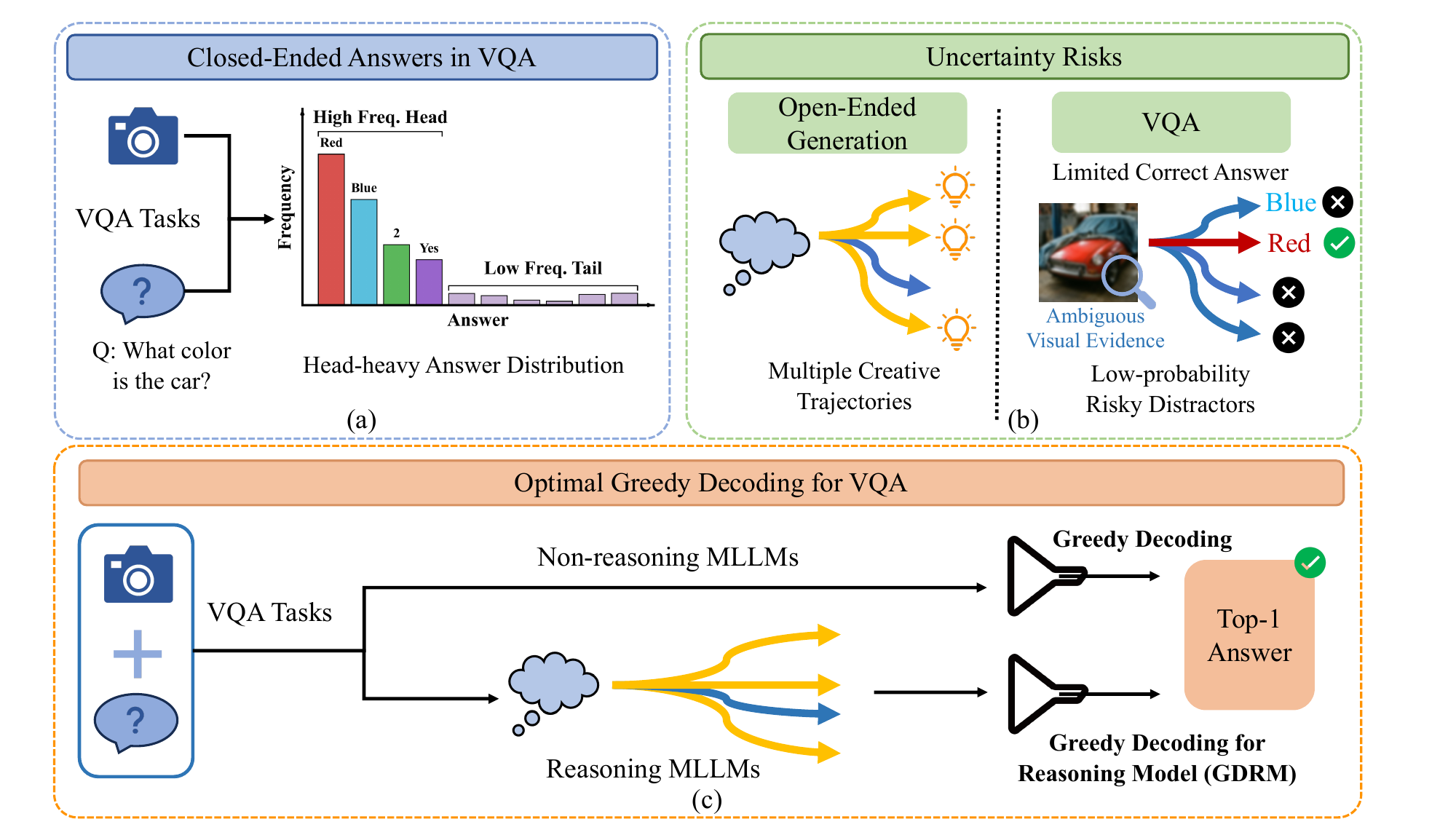}
\caption{\textbf{(a)} VQA Answer distributions are head-heavy (\emph{e.g.}, numbers, boolean values, and colors); \textbf{(b)} VQA uncertainty is primarily epistemic; opening the tail risk introducing low-probability distractors; \textbf{(c)} GDRM anchors final answer tokens to the greedy prediction while preserving internal reasoning capabilities.}
\label{fig:main}
\end{figure*}

In this paper, we revisit decoding strategies from a calibration perspective. 
We first review existing calibration metrics~\cite{Wang2023CalibrationSurvey} and formalize their theoretical relationship to predictive accuracy. 
We then derive sufficient conditions under which greedy decoding is optimal. Extensive experiments show that these conditions are met by MLLMs on VQA tasks, where greedy decoding consistently outperforms stochastic sampling. 
To summarize, our contributions are:
\begin{itemize}
    \item[\circled{1}] We provide a theoretical formalization of the relationship between model calibration metrics and predictive accuracy, and derive sufficient conditions under which greedy decoding is optimal, enabling principled decoding strategy selection via a development set;
    \item[\circled{2}] We conduct extensive evaluations across diverse benchmarks and models and provide empirical evidence for the superiority of greedy decoding over stochastic sampling on VQA tasks;
    \item[\circled{3}] We propose \emph{Greedy Decoding for Reasoning Models} (GDRM) to improve the reasoning model performance which anchors answer tokens to a greedy prediction without compromising the reasoning capability.
\end{itemize}

\section{Related Work}
\paragraph{Decoding in LLMs.}
LLMs employ sampling to balance output diversity and coherence. Temperature scaling modulates the distribution sharpness by rescaling logits but does not restrict the vocabulary, allowing low-probability tokens to emerge at higher temperatures~\cite{holtzman2019curious}. To mitigate degeneration from the unreliable tail, heuristic truncation methods are widely adopted. Top-$k$ sampling~\cite{fan2018hierarchical} constrains the candidate set to the $k$ most probable tokens, while top-$p$ (nucleus) sampling~\cite{holtzman2019curious} retains the smallest set whose cumulative probability mass exceeds a predefined threshold $p$. However, these static cutoffs involve a difficult trade-off where restrictive thresholds may omit plausible continuations and lenient settings risk incoherence. This limitation has motivated more principled truncation strategies that adapt to contextual uncertainty. \citet{hewitt2022desmoothing} propose $\epsilon$-sampling and entropy-dependent $\eta$-sampling to dynamically adjust the candidate set. Most recently, Min-$p$ sampling~\cite{nguyen2024minp} scales the truncation threshold relative to the probability of the top token, $p_{\max}$. This approach enforces stricter pruning when the model is confident and permits greater diversity when the distribution is flat, providing a more robust mechanism for balancing coherence and variation. Recent studies have shown advantages of deterministic decoding~\cite{song2025good, li2025sample}. However, these findings remain heuristic and lack rigorous theoretical analysis.

\paragraph{Decoding in MLLMs.}
MLLMs typically inherit LLM decoding heuristics. However, multimodal generation susceptible to object hallucination, where entities in model outputs lack visual grounding~\cite{li2023evaluating}. Recent progress, therefore, focuses on guided or constrained decoding to improve visual grounding. Visual Contrastive Decoding~\cite{leng2024mitigating} contrasts distributions under real versus perturbed images to downweight tokens driven by language priors. Variants along these lines re-score or filter candidates to emphasize visual faithfulness~\cite{ghosh2024visual, an2025mitigating, su2025activation}. In contrast, our analysis targets the complementary question of deterministic versus stochastic decoding, and thus is applicable on top of these approaches. 

\section{Theoretical Framework}
In this section, we first establish the notation and problem setup (Section~\ref{sec:prob}). We then extend existing calibration metrics to strategy-specific metrics (\emph{i.e.}, $\text{ECE}^\alpha$ and $\text{BS}^\alpha$) for any sampling strategy parameterized by $\alpha$ (Section~\ref{sec:cal}). Finally, we derive sufficient conditions for greedy optimality using these metrics (Section~\ref{sec:greedy}).

\subsection{Problem Setup.}
\label{sec:prob}
For an image--question pair $(I,x)$, let $\mathcal{A}$ be a normalized finite answer set
(\emph{e.g.,} a VQA vocabulary after string canonicalization).
The ground truth \emph{answer-level posterior} is defined as
$p(\cdot|I, x):\mathcal{A}\to[0,1]$ with $\sum_{a\in\mathcal{A}} p(a|I,x)=1$.
Let $q(\cdot| I, x):\mathcal{A}\to[0,1]$ denote the posterior estimated by a MLLM. 

In practice, an answer $a$ is decoded from a sequence of tokens $y_{1:T}$ via a decoding function $D$, such that $a = D(y_{1:T})$, where $y_t$ is the token generated at step $t$ and $T$ is the sequence length. 
The answer probability is computed as the joint product of the token probabilities.

Consider an arbitrary candidate selection strategy parameterized by $\alpha$ (\emph{e.g.,} \emph{top-$k$}, \emph{top-$p$}, or \emph{min-$p$}), where $\alpha$ represents the corresponding hyperparameter value ($k$ for \emph{top-k}, $p$ for \emph{top-p} and $p_{\text{base}}$ for \emph{min-p}). 
For a given input pair $(I, x)$, let $S^\alpha_t(I,x, y_{<t})$ be the set of candidate tokens selected by the strategy at step $t$. 
Under this strategy, tokens are sampled from $S^\alpha_t$ proportionally to the model's original likelihood
\[
y_t\sim \underbrace{\frac{p_\theta\!\left(\cdot\mid y_{<t}, I, x\right)\,\mathbf{1}\{\cdot\in S_t^\alpha\}}{\sum_{y\in S_t^\alpha} p_\theta\!\left(\cdot\mid y_{<t}, I, x\right)}}_{:=p^\alpha_\theta(y\mid I, x, y_{<t})}.
\]
where $\mathbf{1}\{\cdot\}$ is the indicator function and $p_\theta^\alpha$ denotes the resulting sampling distribution.

Ideally, $\alpha$ is chosen to maximize the expectation of producing correct predictions. This objective is defined as the following optimization problem:
\begin{align}
    \max_{\alpha} J(\alpha):=\mathbb{E}_{I,x, a\sim q^\alpha(\cdot|I,x)}[p(a|I,x)] 
\end{align}
where $q^\alpha(a|I,x) = \sum_{D(y_{1:T})=a} p^\alpha_\theta(y_{1:T}|I,x)$ is the probability of selecting answer $a$ under the strategy parameter $\alpha$.
Note that optimization of $J(\alpha)$ is generally intractable in practice due to the unknown ground truth distribution $p$. 
In the following sections, 
%
we identify the sufficient conditions under which the greedy strategy (taking the answer with the highest confidence) is optimal, and connect them to the calibration metrics.
We denote the corresponding answer generated by the greedy strategy as $a^1$.
%


\subsection{Calibration Measurement}
\label{sec:cal}
Several metrics exist for measuring the calibration of deep learning models~\cite{Wang2023CalibrationSurvey}. In this work, we adopt two widely applied measures: Expected Calibration Error (ECE)~\cite{guo2017calibration} and the Brier Score (BS)~\cite{brier1950verification} (definition in Appendix~\ref{sec:cal_metric}). We define strategy-specific calibration metrics $\text{ECE}^\alpha$ and $\text{BS}^\alpha$ as follows.

\paragraph{$\text{ECE}^\alpha$.}
We define $\text{ECE}^\alpha$ directly in terms of the distributions $p$ and $q$ as the expected absolute error over the samples generated by strategy $\alpha$:
\begin{align}
    \text{ECE}^\alpha &:= \mathbb{E}_{I, x, a \sim q^\alpha(\cdot|I, x)} \left[ |q(a|I, x) - p(a|I, x)| \right], \label{def:ece_a}\\ 
    \text{ECE}^1 &:= \mathbb{E}_{I, x} \left[ |q(a^1|I, x) - p(a^1|I, x)| \right],\label{def:ece_1}
\end{align}
where $\text{ECE}^1$ is the calibration error of the greedy strategy (taking the highest-confidence token).

\paragraph{$\text{BS}^\alpha$.}
Similarly, we define the strategy-specific Brier Score as the expected normalized squared difference:
\begin{align}
    \text{BS}^\alpha &:= \mathbb{E}_{I, x, a \sim q^\alpha(\cdot|I, x)} \left[ \frac{|q(a|I, x) - p(a|I, x)|^2}{q(a|I, x)} \right], \label{def:bs_a}\\
    \text{BS}^1 &:= \mathbb{E}_{I, x} \left[ \frac{|q(a^1|I, x) - p(a^1|I, x)|^2}{q(a^1|I, x)} \right],\label{def:bs_1}
\end{align}
where $\text{BS}^1$ is the calibration error of the greedy strategy (always taking the token with the highest confidence). Note that in Eq.~\ref{def:bs_a} we substitute the denominator $p(a|I, x)$ in the original definition of BS (Appendix~\ref{sec:cal_metric}) with $q(a|I, x)$ (see Appendix~\ref{supp-bsk} for details). 

\subsection{Greedy Optimality}
\label{sec:greedy} 
Our primary theoretical contribution is summarized in the following theorem.
\begin{theorem} \label{thr:cal}
Let $\text{ECE}^\alpha$, $\text{ECE}^1$, $\text{BS}^\alpha$, and $\text{BS}^1$ be defined as in~\eqref{def:ece_a},~\eqref{def:ece_1},~\eqref{def:bs_a} and~\eqref{def:bs_1}. Within the family of stochastic sampling strategies parameterized by $\alpha$, greedy decoding is optimal if, for every non-greedy $\alpha$, at least one of the following conditions holds:
\begin{enumerate}
    \item[\circled{1}] $G^\alpha_1 := \mathbb{E}_{I,x, a \sim q^\alpha} [q(a^1|I,x) - q(a|I,x)] \\ - \text{ECE}^1 - \text{ECE}^\alpha \geq 0 \label{cond:1}$
    \item[\circled{2}] $G^\alpha_2 := \mathbb{E}_{I,x, a \sim q^\alpha} \left[ q(a^1|I,x) - \frac{1+q^2(a|I,x)}{2q(a|I,x)} \right] \\- \text{ECE}^1 + \frac{\text{BS}^\alpha}{2} \geq 0 \label{cond:2}$
\end{enumerate}
\end{theorem}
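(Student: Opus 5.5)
The plan is to show that $J(\alpha_{\text{greedy}}) - J(\alpha) \ge G^\alpha_i$ for whichever condition holds. Then $G^\alpha_i \ge 0$ gives $J(\text{greedy}) \ge J(\alpha)$ for every non-greedy $\alpha$, which is the claim. Greedy deterministically outputs $a^1$, so
\[
J(\text{greedy}) - J(\alpha) = \mathbb{E}_{I,x,a\sim q^\alpha}\bigl[p(a^1|I,x) - p(a|I,x)\bigr].
\]
The term $p(a^1|I,x)$ does not depend on $a$, so it can sit inside the expectation over $a\sim q^\alpha$. The whole argument consists of inserting $q$ into this difference and bounding the resulting residuals by the calibration quantities.

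For condition \circled{1}, write
\[
p(a^1)-p(a) = \bigl[q(a^1)-q(a)\bigr] + \bigl[p(a^1)-q(a^1)\bigr] + \bigl[q(a)-p(a)\bigr].
\]
Bound the second bracket below by $-|q(a^1)-p(a^1)|$. Its expectation is exactly $-\text{ECE}^1$, because it does not depend on the sampled $a$. Bound the third bracket below by $-|q(a)-p(a)|$. Its expectation under $a\sim q^\alpha$ is $-\text{ECE}^\alpha$ by \eqref{def:ece_a}. Taking expectations gives $J(\text{greedy})-J(\alpha)\ge G^\alpha_1$.

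For condition \circled{2}, I keep the $-\text{ECE}^1$ step for $a^1$ but handle $-p(a)$ differently, since the aim is a term involving $\text{BS}^\alpha$. Fix $a$ and write $q=q(a)$, $p=p(a)$. I need an inequality of the form
\[
-p \ge -\tfrac{1+q^2}{2q} + \tfrac{(q-p)^2}{2q}.
\]
Multiplying through by $2q>0$, this is equivalent to $-2pq \ge -1 - q^2 + q^2 - 2pq + p^2$, that is, $0 \ge -1 + p^2$. This holds because $p\le 1$. Equivalently, $(q-p)^2 \le 1 - 2pq + q^2$. Taking expectations over $a\sim q^\alpha$ then gives
\[
-\mathbb{E}[p(a)] \ge -\mathbb{E}\Bigl[\tfrac{1+q^2(a)}{2q(a)}\Bigr] + \tfrac{\text{BS}^\alpha}{2}.
\]
Combining this with $\mathbb{E}[p(a^1)] \ge \mathbb{E}[q(a^1)] - \text{ECE}^1$ yields $J(\text{greedy})-J(\alpha)\ge G^\alpha_2$.

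The main obstacle is finding the right pointwise inequality for \circled{2}; once it is guessed, it reduces to $p^2\le 1$. A further technical point is that $q(a)>0$ on the support of $q^\alpha$. This holds because $q^\alpha$ only samples tokens with positive model probability, so the division is well defined. Finally, the family includes greedy (e.g. $k=1$), and the comparison is made against every other member, which gives optimality within the family.
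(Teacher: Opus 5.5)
Your proof is correct and follows the paper's argument. The paper bounds $C^\alpha=\mathbb{E}_{I,x,a\sim q^\alpha}[p(a|I,x)]$ above by $\mathbb{E}[q(a|I,x)]+\text{ECE}^\alpha$, or alternatively, via $pq=\tfrac{p^2+q^2}{2}-\tfrac{(p-q)^2}{2}$ and $p\le 1$, by $\mathbb{E}\bigl[\tfrac{1+q^2}{2q}\bigr]-\tfrac{\text{BS}^\alpha}{2}$; it bounds the greedy correctness $C^1$ below by $\mathbb{E}[q(a^1|I,x)]-\text{ECE}^1$ and chains these, which is exactly your bound $J(\text{greedy})-J(\alpha)\ge G^\alpha_i$, with your pointwise inequality for Condition 2 being the same identity rearranged.
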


The proof of Theorem~\ref{thr:cal} is in Appendix~\ref{sec:proof}.

Theorem~\ref{thr:cal} suggests that the greedy decoding is optimal under different conditions. Condition~\circled{1} can be satisfied when model assigns high confidence to the top-$1$ answer while maintaining low confidence for others (\emph{i.e.}, large $\mathbb{E}_{I,x,a\sim q^\alpha}[q(a^1|I,x)-q(a|I,x)]$), and is well calibrated across all answers (\emph{i.e.}, low $\text{ECE}^1$ and $\text{ECE}^\alpha$). These requirements align with observed VQA behavior: well-calibrated models exhibit bimodal confidence, with correct predictions concentrated at high confidence and incorrect ones at low confidence, enabling accuracy gains via abstention on uncertain cases~\citep{whitehead2022reliable,khan2024consistency}.
Condition~\circled{2} is easier to satisfy when the top-$1$ prediction is both confident and relatively well-calibrated (\emph{i.e.}, high $q(a^1|I,x)$ and low $\text{ECE}^1$), and lower-ranked ones are substantially less calibrated (\emph{i.e.}, large BS$^\alpha$ for sampling with parameter $\alpha$). 

In practice, although the ground-truth posterior $p$ is not directly observable, $G^\alpha_1$ and $G^\alpha_2$ in Theorem~\ref{thr:cal} can be estimated from a small labeled development set and used as a \emph{pre-hoc diagnostic} for decoding-strategy selection (see Appendix~\ref{sec:appendix_chartqa} for details).

\section{Experiments}
\label{sec:experiments}
\subsection{Benchmark}
\label{subsec:benchmark}
\paragraph{Datasets.}
We use $6$ English benchmarks: (1) MMMU~\cite{yue2024mmmu}, a multidisciplinary multimodal benchmark spanning STEM, humanities, and professional domains; (2) ChartQA~\cite{masry2022chartqa}, which tests reasoning over charts and plots; (3) BLINK~\cite{fu2024blink}, a VQA dataset containing visual commonsense problems; (4) MM-HallBench~\cite{sun2024aligning}, a multimodal hallucination benchmark; (5) MMLU~\cite{mmlu}, a text-only multiple-choice QA benchmark; and (6) CapArena~\cite{cheng2025caparena}, a detailed image captioning benchmark. All results are reported on the official validation splits, which contain 900 (MMMU), 1,920 (ChartQA), 1,901 (BLINK), 96 (MM-HallBench), and 1,531 (MMLU) examples, and 600 images (CapArena).

\paragraph{Models.}
We evaluate three open-source MLLMs: Qwen2.5-VL (3B and 7B)~\cite{bai2025qwen2}, LLaVA-v1.5 (7B)~\cite{Liu2023LLaVA}, and Qwen3-VL-Thinking (4B)~\cite{qwen3vl}. All models are evaluated in the zero-shot setting.

\paragraph{Evaluation metrics.}
Overall accuracy is reported on all datasets except MM-HallBench and CapArena. For multiple-choice questions, accuracy is computed via exact match; for free-form questions, we use normalized soft matching. For stochastic sampling, results are averaged using four different random seeds. Prompt formatting and inference details are in Appendix~\ref{sec:implementation_detail}. We report Answer Score and Hallucination Rate for MM-HallBench, and Average Score and Length for CapArena (detailed definitions are in Appendix~\ref{metric}).

\subsection{Baseline}
\label{subsec:baseline}

\paragraph{Decoding strategies.}
We compare eight decoding strategies: greedy decoding, temperature, top-$k$ \cite{holtzman2019curious}, top-$p$ \cite{holtzman2019curious}, min-$p$ \cite{nguyen2024minp}, $\varepsilon$-Sampling, $\eta$-Sampling \cite{hewitt2022desmoothing}, and Beam Search.

\paragraph{Hyperparameters.}
For each strategy, we sweep a compact grid of commonly used hyperparameters and report the best setting on the validation split. For stochastic sampling methods, we vary temperature $T$ jointly with the strategy-specific hyperparameters; greedy decoding requires no sweep. All sampling baselines in Table 7 use the officially recommended decoding parameters for Qwen3-VL Thinking models, as specified in the Qwen3-VL Technical Report~\cite{qwen3vl}. Hyperparameter details are provided in Appendix~\ref{sec:appendix_hyperparameters}.


\subsection{Main Results}
\begin{table}[t]
\centering
\caption{Accuracy (in \%) of Beam Search on ChartQA benchmark using Qwen2.5-VL (3B).}
\label{tab:qwen3-4b-beam-search}
\small
\begin{tabular}{lccc}
\toprule
\textbf{Method} & $b{=}3$ & $b{=}5$ & $b{=}10$ \\
\midrule
Beam Search & 80.99 & 81.41 & 81.30 \\
\midrule
Greedy & \multicolumn{3}{c}{$\textbf{83.12}$} \\
\bottomrule
\end{tabular}
\end{table}
\begin{table*}[htbp]
\centering
\caption{Accuracy (mean $\pm$ std, in \%) of different decoding and sampling strategies on MMMU, ChartQA, and BLINK using Qwen2.5-VL (3B), averaged over 4 random seeds. Best and second-best are \textbf{bolded} and \underline{underlined}.}
\label{tab:qwen3-3b-bytemp}
\scriptsize
\setlength{\tabcolsep}{3pt}
\resizebox{\textwidth}{!}{%
\begin{tabular}{lccc ccc ccc}
\toprule
& \multicolumn{3}{c}{\textbf{MMMU}} & \multicolumn{3}{c}{\textbf{ChartQA}} & \multicolumn{3}{c}{\textbf{BLINK}} \\
\cmidrule(lr){2-4}\cmidrule(lr){5-7}\cmidrule(lr){8-10}
\textbf{Method} & $\tau{=}0.7$ & $\tau{=}1.0$ & $\tau{=}2.0$ & $\tau{=}0.7$ & $\tau{=}1.0$ & $\tau{=}2.0$ & $\tau{=}0.7$ & $\tau{=}1.0$ & $\tau{=}2.0$ \\
\midrule
Temp' Only             & \underline{43.00 {\scriptsize$\pm$} 0.67} & 41.11 {\scriptsize$\pm$} 0.81 & 33.67 {\scriptsize$\pm$} 0.91 & 79.58 {\scriptsize$\pm$} 0.40 & 76.14 {\scriptsize$\pm$} 0.32 & 48.33 {\scriptsize$\pm$} 1.60 & 32.67 {\scriptsize$\pm$} 1.09 & 28.35 {\scriptsize$\pm$} 0.97 & 23.72 {\scriptsize$\pm$} 1.06 \\
\rowcolor{gray!10}
Top-$k$                & 42.53 {\scriptsize$\pm$} 1.29 & 41.08 {\scriptsize$\pm$} 0.87 & 33.42 {\scriptsize$\pm$} 1.39 & 79.61 {\scriptsize$\pm$} 0.37 & 76.11 {\scriptsize$\pm$} 0.26 & 51.89 {\scriptsize$\pm$} 0.41 & 32.67 {\scriptsize$\pm$} 1.20 & 30.82 {\scriptsize$\pm$} 0.96 & 26.35 {\scriptsize$\pm$} 0.59 \\
Top-$p$ (nucleus)      & 42.69 {\scriptsize$\pm$} 0.43 & \underline{41.30 {\scriptsize$\pm$} 1.14} & 28.20 {\scriptsize$\pm$} 1.83 & \underline{81.13 {\scriptsize$\pm$} 0.43} & 78.93 {\scriptsize$\pm$} 0.47 & 47.71 {\scriptsize$\pm$} 1.52 & 31.61 {\scriptsize$\pm$} 1.28 & \underline{31.67 {\scriptsize$\pm$} 0.74} & 27.14 {\scriptsize$\pm$} 0.56 \\
\rowcolor{gray!10}
Min-$p$                & 42.64 {\scriptsize$\pm$} 1.58 & 40.20 {\scriptsize$\pm$} 0.96 & 28.64 {\scriptsize$\pm$} 0.99 & 79.62 {\scriptsize$\pm$} 0.36 & 76.09 {\scriptsize$\pm$} 0.49 & 45.66 {\scriptsize$\pm$} 2.59 & \underline{33.30 {\scriptsize$\pm$} 0.38} & 30.98 {\scriptsize$\pm$} 0.75 & \underline{29.93 {\scriptsize$\pm$} 1.32} \\
$\varepsilon$-Sampling & 42.45 {\scriptsize$\pm$} 1.90 & 40.50 {\scriptsize$\pm$} 1.34 & \underline{35.03 {\scriptsize$\pm$} 1.45} & 79.91 {\scriptsize$\pm$} 0.32 & \underline{76.53 {\scriptsize$\pm$} 0.51} & \underline{52.46 {\scriptsize$\pm$} 1.63} & 31.09 {\scriptsize$\pm$} 1.04 & 31.19 {\scriptsize$\pm$} 0.65 & 26.25 {\scriptsize$\pm$} 0.42 \\
\rowcolor{gray!10}
$\eta$-Sampling        & \underline{43.00 {\scriptsize$\pm$} 1.76} & 40.44 {\scriptsize$\pm$} 0.62 & 28.09 {\scriptsize$\pm$} 1.72 & 79.66 {\scriptsize$\pm$} 0.34 & 76.46 {\scriptsize$\pm$} 0.43 & 48.16 {\scriptsize$\pm$} 0.56 & 31.09 {\scriptsize$\pm$} 0.91 & 31.04 {\scriptsize$\pm$} 0.27 & 25.51 {\scriptsize$\pm$} 0.95 \\
\midrule
Greedy & \multicolumn{3}{c}{\textbf{45.44}} & \multicolumn{3}{c}{\textbf{83.12}} & \multicolumn{3}{c}{\textbf{35.32}} \\
\bottomrule
\end{tabular}%
}
\end{table*}

Table~\ref{tab:qwen3-3b-bytemp}, Table~\ref{tab:qwen3-7b-bytemp}, and Table~\ref{tab:llava-7b-bytemp} compare greedy decoding with stochastic sampling on MMMU, ChartQA, and BLINK using Qwen2.5-VL (3B and 7B) and LLaVA-v1.5 (7B), respectively. Greedy decoding consistently outperforms stochastic strategies, with the margin over the best stochastic method exceeding the standard deviation in 24 of 27 settings. Notebly, for Qwen2.5-VL, greedy dominates all settings with gap-to-deviation ratios of up to 18.8$\times$.  The only stochastic win occurs for LLaVA-v1.5 (7B) on MMMU at $\tau=1.0$, where Top-$p$ sampling achieves 35.67\% compared with 34.78\% for greedy decoding. LLaVA-v1.5 also exhibits substantially weaker top-$1$ calibration ($\text{ECE}^1$; Figure~\ref{fig:ece-topk}), consistent with our theoretical analysis that poor top-$1$ calibration weakens the sufficient conditions for greedy optimality. Importantly, failure of these sufficient conditions does not itself identify which alternative decoding strategy will be optimal; rather, it indicates that greedy decoding is no longer theoretically certified and that alternative strategies should be compared empirically (see Section~\ref{sec:analytial_res} for detailed analysis). 

Table~\ref{tab:qwen3-4b-beam-search} compares beam search with greedy decoding using Qwen2.5-VL (3B) on ChartQA, where answers are multi-token rather than multiple-choices (\emph{i.e.}, \texttt{A/B/C/D}). Greedy decoding achieves higher accuracy than beam search despite a lower computational budget. Finally, human evaluation of ChartQA outputs by two researchers yields high agreement with string-match labels (Cohen's $\kappa=0.95$), suggesting our results are not driven by string-matching artifacts.

Figure~\ref{fig:acc-temp} shows that stochastic sampling is highly sensitive to temperature across models and datasets. Specifically, performance degrades rapidly as temperature increases, reflecting an inverse relationship between randomness and accuracy that aligns with the head-heavy answer distributions in VQA. Conversely, as temperature decreases, sampling approximates greedy decoding and performance improves. This convergence toward greedy decoding as the optimal limit is precisely the phenomenon our theoretical framework explains. Overall, these results indicate that greedy decoding is an effective and robust decoding strategy for VQA.



\begin{table*}[htbp]
\centering
\caption{Accuracy (mean $\pm$ std, in \%) of different decoding and sampling strategies on MMMU, ChartQA, and BLINK using Qwen2.5-VL (7B), averaged over 4 random seeds. Best and second-best are \textbf{bolded} and \underline{underlined}.}
\label{tab:qwen3-7b-bytemp}
\scriptsize
\setlength{\tabcolsep}{3pt}
\resizebox{\textwidth}{!}{%
\begin{tabular}{lccc ccc ccc}
\toprule
& \multicolumn{3}{c}{\textbf{MMMU}} & \multicolumn{3}{c}{\textbf{ChartQA}} & \multicolumn{3}{c}{\textbf{BLINK}} \\
\cmidrule(lr){2-4}\cmidrule(lr){5-7}\cmidrule(lr){8-10}
\textbf{Method} & $\tau{=}0.7$ & $\tau{=}1.0$ & $\tau{=}2.0$ & $\tau{=}0.7$ & $\tau{=}1.0$ & $\tau{=}2.0$ & $\tau{=}0.7$ & $\tau{=}1.0$ & $\tau{=}2.0$ \\
\midrule
Temp' Only             & 47.78 {\scriptsize$\pm$} 0.74 & 45.67 {\scriptsize$\pm$} 0.88 & 40.22 {\scriptsize$\pm$} 1.02 & 78.96 {\scriptsize$\pm$} 0.36 & 77.97 {\scriptsize$\pm$} 0.41 & 58.38 {\scriptsize$\pm$} 1.28 & 37.56 {\scriptsize$\pm$} 0.83 & 35.61 {\scriptsize$\pm$} 0.79 & 32.25 {\scriptsize$\pm$} 0.94 \\
\rowcolor{gray!10}
Top-$k$                & 47.78 {\scriptsize$\pm$} 0.91 & 46.00 {\scriptsize$\pm$} 0.76 & 38.22 {\scriptsize$\pm$} 1.31 & 78.96 {\scriptsize$\pm$} 0.33 & 77.92 {\scriptsize$\pm$} 0.35 & 61.30 {\scriptsize$\pm$} 0.96 & 37.56 {\scriptsize$\pm$} 0.88 & 35.61 {\scriptsize$\pm$} 0.82 & \underline{35.35 {\scriptsize$\pm$} 0.67} \\
Top-$p$ (nucleus)      & 46.56 {\scriptsize$\pm$} 0.69 & \underline{46.33 {\scriptsize$\pm$} 0.92} & 40.33 {\scriptsize$\pm$} 1.17 & 80.00 {\scriptsize$\pm$} 0.39 & \underline{79.69 {\scriptsize$\pm$} 0.44} & 67.60 {\scriptsize$\pm$} 1.12 & \underline{38.77 {\scriptsize$\pm$} 0.71} & \underline{37.45 {\scriptsize$\pm$} 0.68} & 34.03 {\scriptsize$\pm$} 0.73 \\
\rowcolor{gray!10}
Min-$p$                & 47.33 {\scriptsize$\pm$} 1.12 & 44.11 {\scriptsize$\pm$} 0.95 & \underline{44.33 {\scriptsize$\pm$} 0.86} & 71.41 {\scriptsize$\pm$} 1.84 & 71.56 {\scriptsize$\pm$} 1.76 & \underline{71.51 {\scriptsize$\pm$} 1.69} & 29.40 {\scriptsize$\pm$} 1.21 & 30.30 {\scriptsize$\pm$} 1.08 & 30.25 {\scriptsize$\pm$} 1.02 \\
$\varepsilon$-Sampling & 49.67 {\scriptsize$\pm$} 1.36 & 45.44 {\scriptsize$\pm$} 1.01 & 40.33 {\scriptsize$\pm$} 1.09 & \underline{80.16 {\scriptsize$\pm$} 0.31} & 78.18 {\scriptsize$\pm$} 0.48 & 69.79 {\scriptsize$\pm$} 1.24 & 37.30 {\scriptsize$\pm$} 0.76 & 35.77 {\scriptsize$\pm$} 0.71 & 31.78 {\scriptsize$\pm$} 0.81 \\
\rowcolor{gray!10}
$\eta$-Sampling        & \underline{49.56 {\scriptsize$\pm$} 1.28} & 44.67 {\scriptsize$\pm$} 0.83 & 39.11 {\scriptsize$\pm$} 1.26 & \underline{80.16 {\scriptsize$\pm$} 0.34} & 78.73 {\scriptsize$\pm$} 0.46 & 66.20 {\scriptsize$\pm$} 1.07 & 37.30 {\scriptsize$\pm$} 0.79 & 35.82 {\scriptsize$\pm$} 0.64 & 33.82 {\scriptsize$\pm$} 0.77 \\
\midrule
Greedy & \multicolumn{3}{c}{\textbf{52.56}} & \multicolumn{3}{c}{\textbf{82.13}} & \multicolumn{3}{c}{\textbf{41.56}} \\
\bottomrule
\end{tabular}%
}
\end{table*}

\begin{table*}[t]
\centering
\caption{Accuracy (mean $\pm$ std, in \%) of different decoding and sampling strategies on MMMU, ChartQA, and BLINK using LLaVA-1.5 (7B), averaged over 4 random seeds. Best and second-best are \textbf{bolded} and \underline{underlined}.}
\label{tab:llava-7b-bytemp}
\scriptsize
\setlength{\tabcolsep}{3pt}
\resizebox{\textwidth}{!}{%
\begin{tabular}{lccc ccc ccc}
\toprule
& \multicolumn{3}{c}{\textbf{MMMU}} & \multicolumn{3}{c}{\textbf{ChartQA}} & \multicolumn{3}{c}{\textbf{BLINK}} \\
\cmidrule(lr){2-4}\cmidrule(lr){5-7}\cmidrule(lr){8-10}
\textbf{Method} & $\tau{=}0.7$ & $\tau{=}1.0$ & $\tau{=}2.0$ & $\tau{=}0.7$ & $\tau{=}1.0$ & $\tau{=}2.0$ & $\tau{=}0.7$ & $\tau{=}1.0$ & $\tau{=}2.0$ \\
\midrule
Temp' Only             & 32.44 {\scriptsize$\pm$} 0.84 & 32.89 {\scriptsize$\pm$} 0.91 & 31.44 {\scriptsize$\pm$} 1.02 & 21.93 {\scriptsize$\pm$} 0.73 & \underline{19.69 {\scriptsize$\pm$} 0.66} & 8.80 {\scriptsize$\pm$} 1.18 & 38.35 {\scriptsize$\pm$} 0.72 & 38.77 {\scriptsize$\pm$} 0.69 & 35.61 {\scriptsize$\pm$} 0.88 \\
\rowcolor{gray!10}
Top-$k$                & 31.33 {\scriptsize$\pm$} 0.97 & 29.22 {\scriptsize$\pm$} 1.08 & 29.11 {\scriptsize$\pm$} 1.14 & \underline{22.03 {\scriptsize$\pm$} 0.61} & 18.33 {\scriptsize$\pm$} 0.79 & 9.69 {\scriptsize$\pm$} 1.05 & 38.35 {\scriptsize$\pm$} 0.67 & \underline{39.77 {\scriptsize$\pm$} 0.64} & 35.66 {\scriptsize$\pm$} 0.81 \\
Top-$p$ (nucleus)      & 32.33 {\scriptsize$\pm$} 0.76 & \textbf{35.67 {\scriptsize$\pm$} 0.88} & 31.11 {\scriptsize$\pm$} 0.95 & \underline{22.03 {\scriptsize$\pm$} 0.58} & \underline{19.69 {\scriptsize$\pm$} 0.63} & 9.74 {\scriptsize$\pm$} 0.96 & \underline{38.82 {\scriptsize$\pm$} 0.61} & 39.45 {\scriptsize$\pm$} 0.62 & 36.14 {\scriptsize$\pm$} 0.74 \\
\rowcolor{gray!10}
Min-$p$                & 32.44 {\scriptsize$\pm$} 1.03 & 32.00 {\scriptsize$\pm$} 0.92 & 29.67 {\scriptsize$\pm$} 1.21 & 21.51 {\scriptsize$\pm$} 0.81 & 18.18 {\scriptsize$\pm$} 0.72 & 2.97 {\scriptsize$\pm$} 1.43 & 38.35 {\scriptsize$\pm$} 0.75 & 38.77 {\scriptsize$\pm$} 0.71 & 33.77 {\scriptsize$\pm$} 0.97 \\
$\varepsilon$-Sampling & 32.56 {\scriptsize$\pm$} 0.89 & 31.22 {\scriptsize$\pm$} 0.96 & 28.22 {\scriptsize$\pm$} 1.17 & 21.51 {\scriptsize$\pm$} 0.76 & 18.18 {\scriptsize$\pm$} 0.68 & 11.77 {\scriptsize$\pm$} 1.09 & 38.24 {\scriptsize$\pm$} 0.69 & 38.56 {\scriptsize$\pm$} 0.66 & 34.24 {\scriptsize$\pm$} 0.86 \\
\rowcolor{gray!10}
$\eta$-Sampling        & \underline{34.78 {\scriptsize$\pm$} 0.93} & 32.11 {\scriptsize$\pm$} 0.85 & 28.44 {\scriptsize$\pm$} 1.10 & 21.09 {\scriptsize$\pm$} 0.72 & 19.01 {\scriptsize$\pm$} 0.65 & 2.76 {\scriptsize$\pm$} 1.51 & 37.08 {\scriptsize$\pm$} 0.83 & 37.08 {\scriptsize$\pm$} 0.79 & \underline{37.08 {\scriptsize$\pm$} 0.71} \\
\midrule
Greedy & \multicolumn{3}{c}{\underline{34.78}} & \multicolumn{3}{c}{\textbf{23.54}} & \multicolumn{3}{c}{\textbf{39.93}} \\
\bottomrule
\end{tabular}%
}
\end{table*}

\begin{figure}[t]
\centering
\begin{subfigure}[b]{0.67\linewidth}
  \centering
  \includegraphics[width=\linewidth]{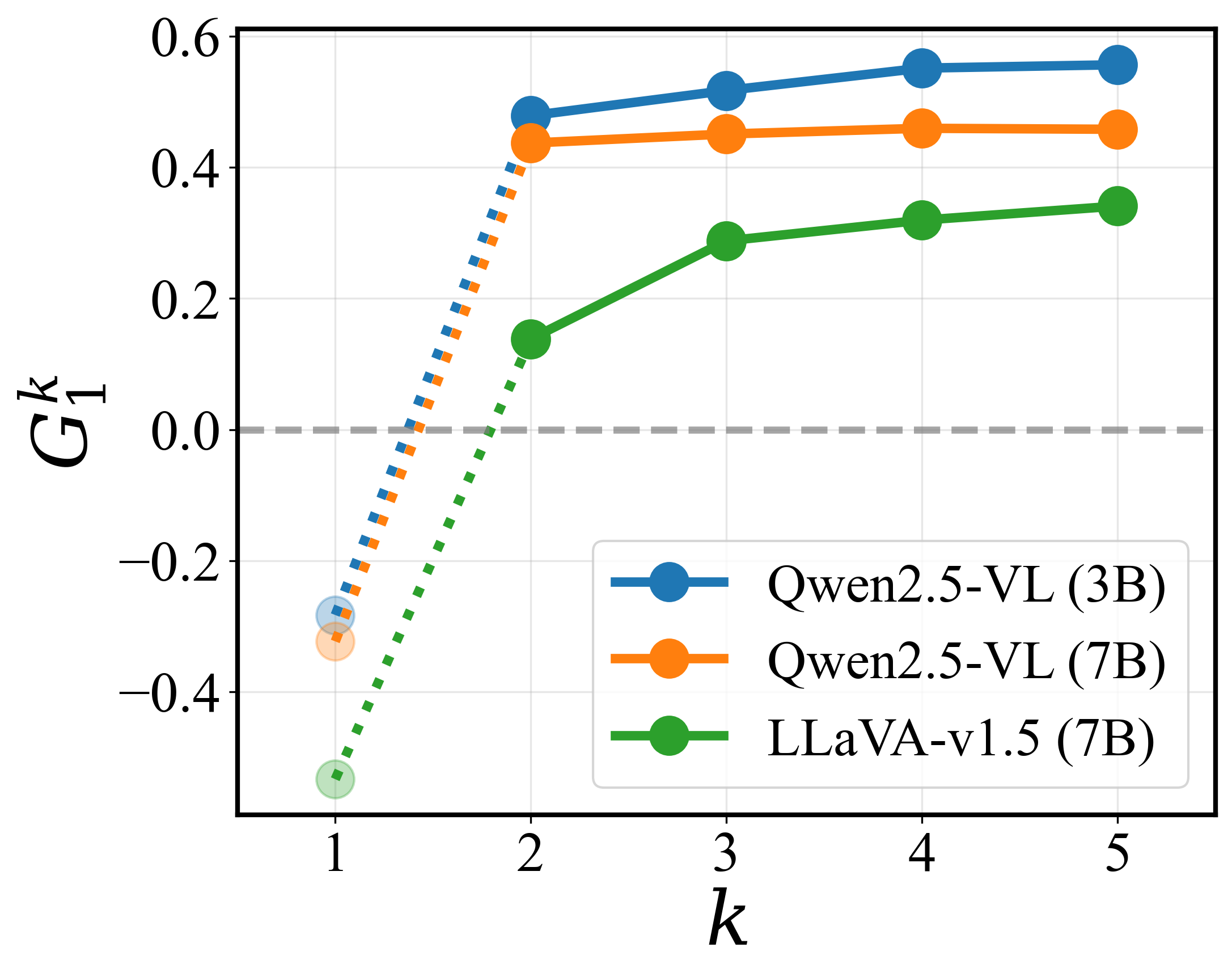}
  \caption{$G^k_1$ versus \ $k$}
  \label{fig:utility-topk-by-top1}
\end{subfigure}


\begin{subfigure}[b]{0.65\linewidth}
  \centering
  \includegraphics[width=\linewidth]{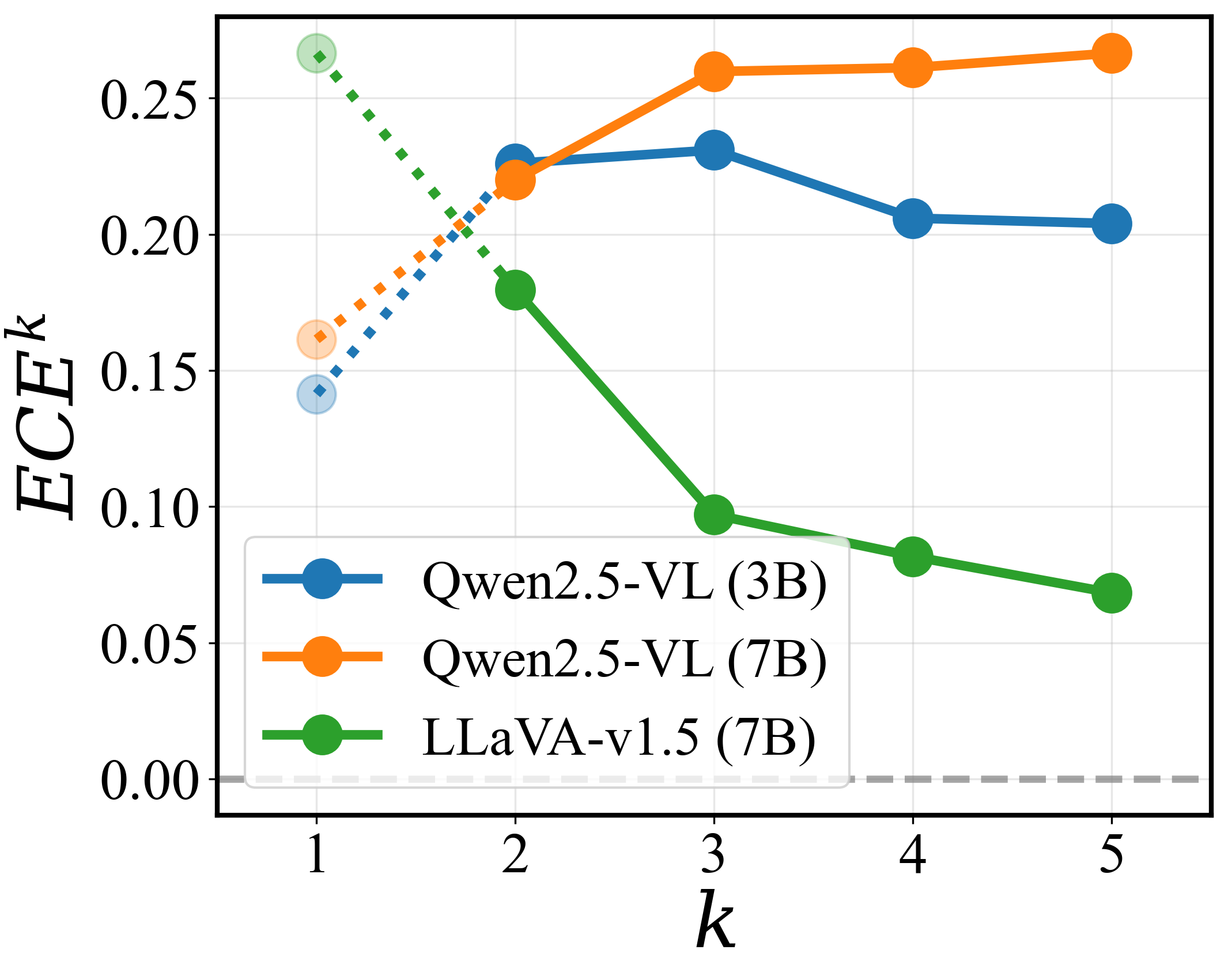}
  \caption{$\text{ECE}^k$ versus \ $k$}
  \label{fig:ece-topk}
\end{subfigure}


\caption{Empirical evidence Theorem~\ref{thr:cal} on ChartQA using Qwen2.5-VL (3B and 7B), and LLaVA-v1.5 (7B). (a) $G^k_1$ ($\alpha:=k$ for \emph{top-k}) is non-negative for all $k>1$. (b) Expected calibration errors $\text{ECE}^k$ ($\alpha:=k$ for \emph{top-k}) across different token rank $k$.}
\label{fig:empirical-evidence}
\end{figure}

\begin{figure*}[t]
  \centering
  \resizebox{\textwidth}{!}{
    \begin{minipage}{\textwidth}
      \begin{subfigure}[t]{0.3\textwidth}
        \centering
        \includegraphics[width=\linewidth]{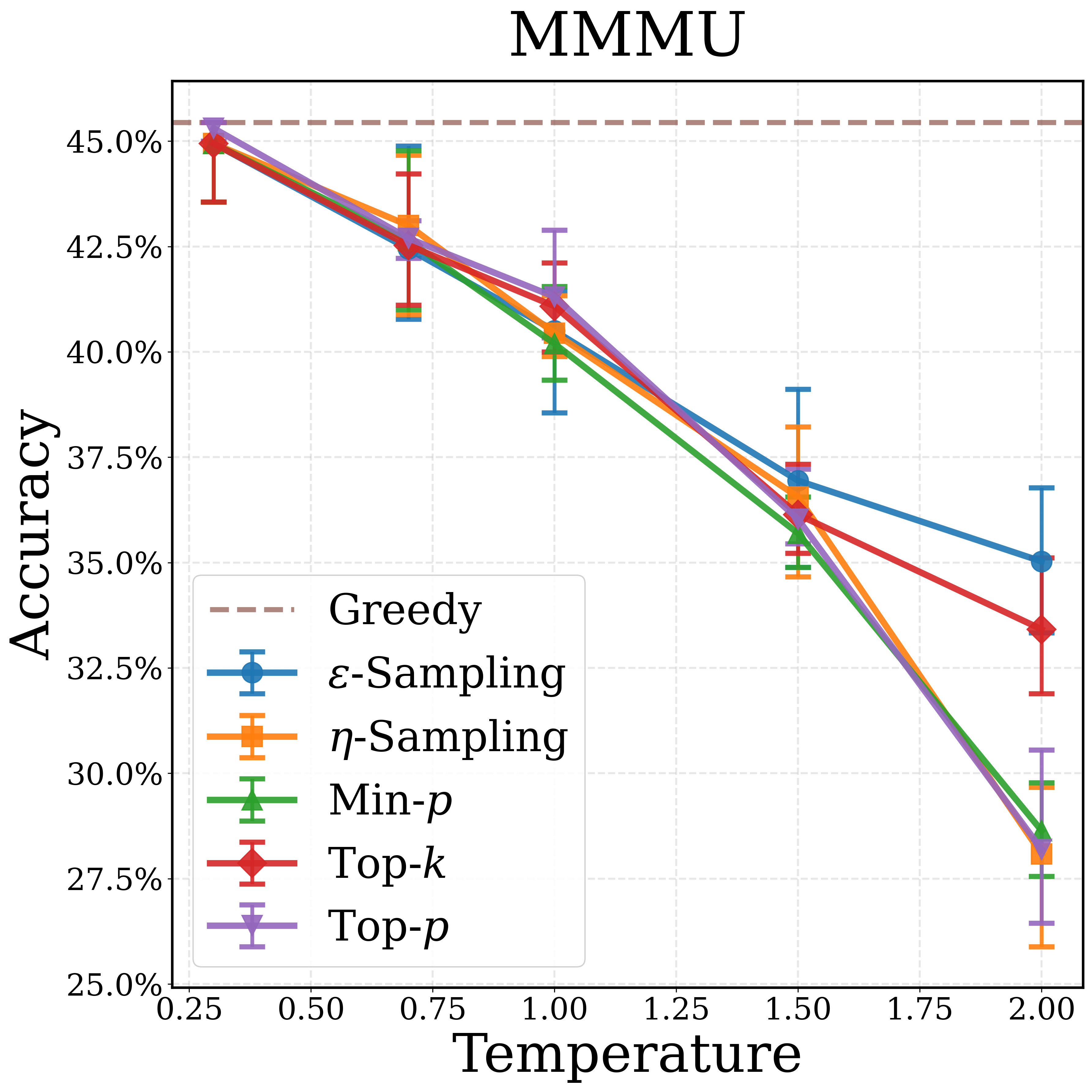}
        \caption{MMMU}
        \label{fig:three-a}
      \end{subfigure}\hfill
      \begin{subfigure}[t]{0.3\textwidth}
        \centering
        \includegraphics[width=\linewidth]{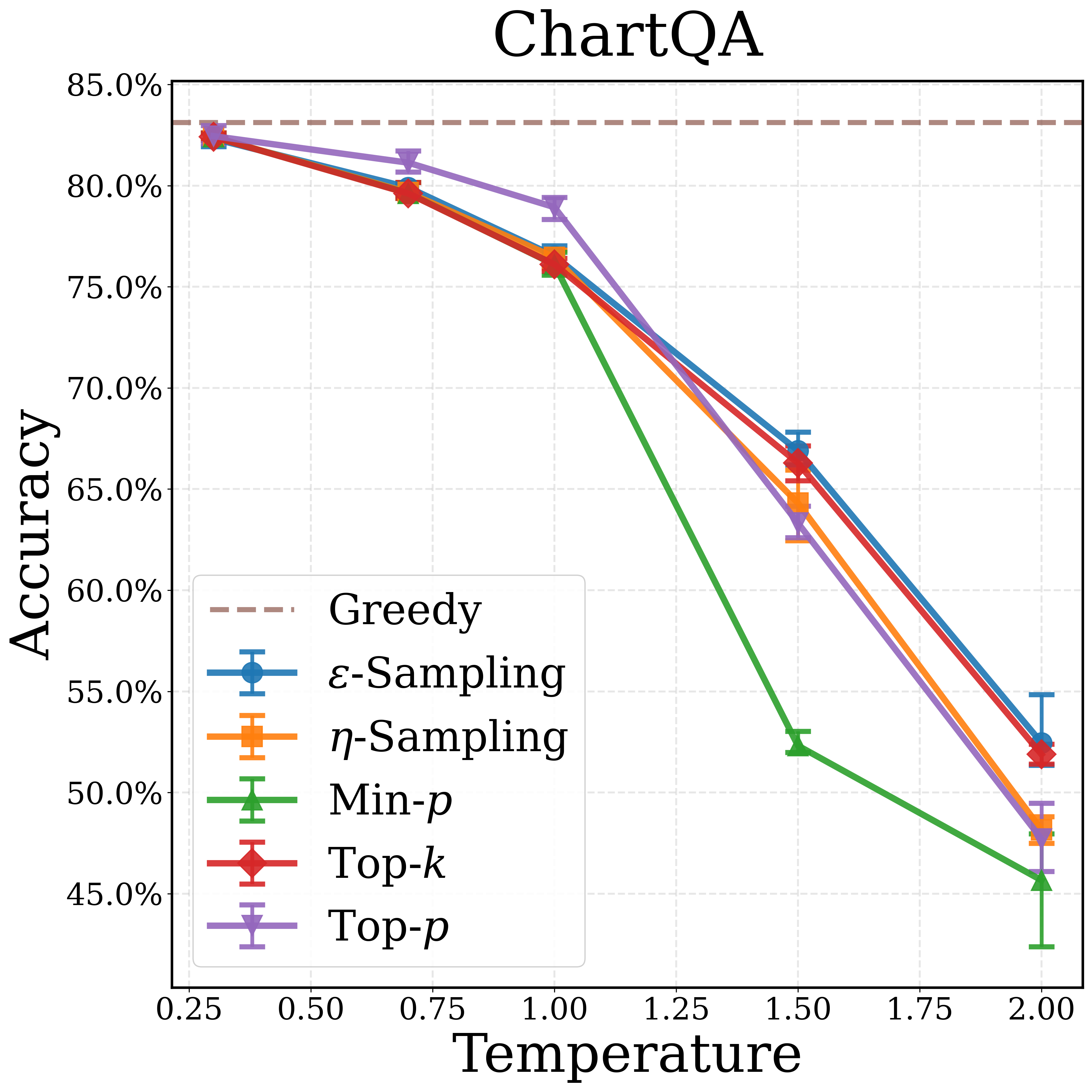}
        \caption{ChartQA}
        \label{fig:three-b}
      \end{subfigure}\hfill
      \begin{subfigure}[t]{0.3\textwidth}
        \centering
        \includegraphics[width=\linewidth]{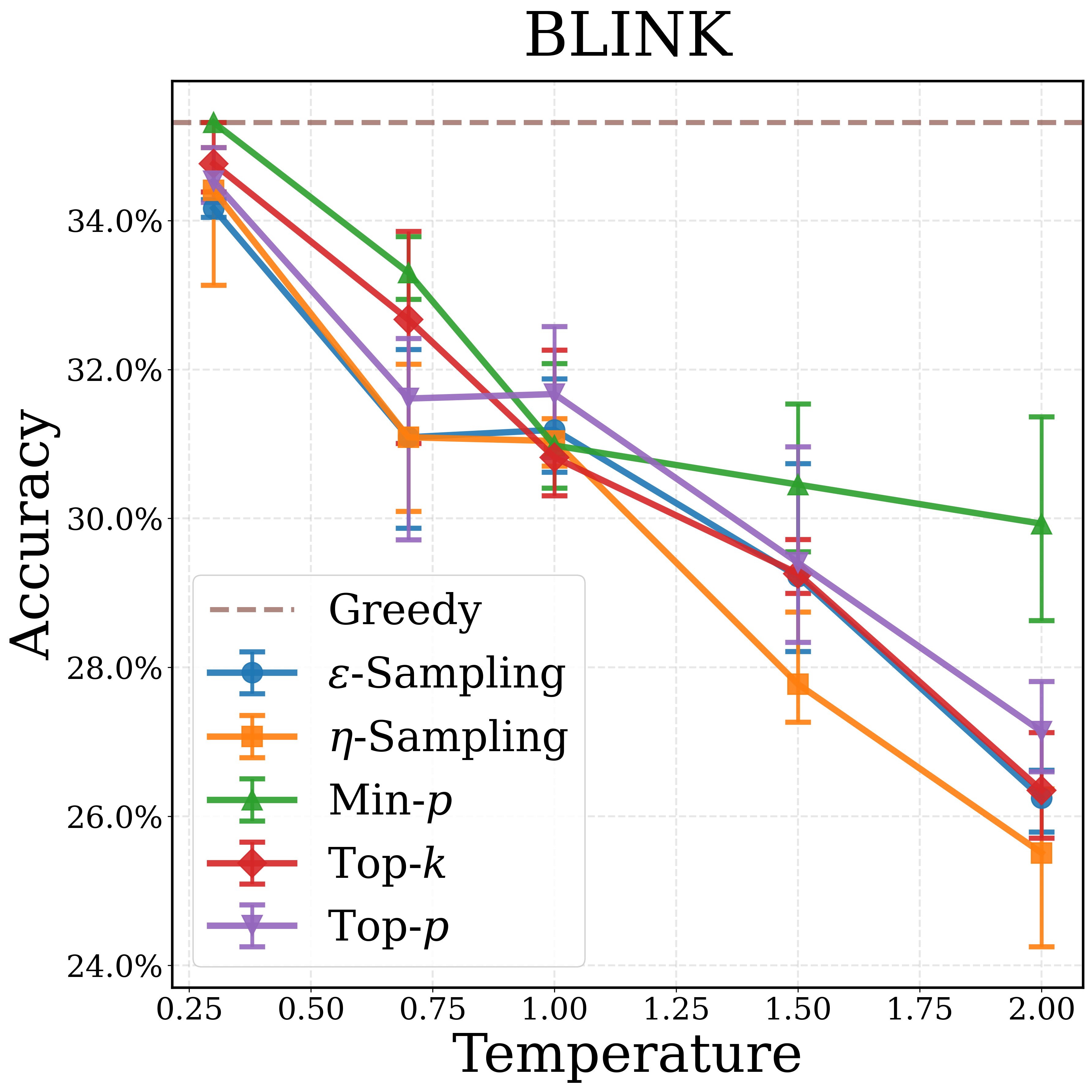}
        \caption{BLINK}
        \label{fig:three-c}
      \end{subfigure}
    \end{minipage}
  }
  \caption{Accuracy of different decoding/sampling strategies on MMMU, ChartQA and BLINK benchmarks using Qwen2.5-VL (3B) with different temperatures.}
  \label{fig:acc-temp}
\end{figure*}

\begin{table}[h]
\centering
\caption{Average Score and Hallucination Rate of different decoding/sampling strategies on MM-HallBench using Qwen2.5-VL (3B).}
\scriptsize
\begin{tabular}{lcc}
\toprule
\textbf{Strategy} & \textbf{Score $\uparrow$}  & \textbf{Hallucination Rate (\%) $\downarrow$} \\
\midrule
Temp' Only & 3.48 & 36 \\
\rowcolor{gray!10}
Top-$k$ & 3.38 & 42 \\
Top-$p$ (nucleus) & 3.58 & 36 \\
\rowcolor{gray!10}
Min-$p$ & 3.32 & 43 \\
$\varepsilon$-Sampling & 3.36 & 35 \\
\rowcolor{gray!10}
$\eta$-Sampling & 3.57 & 38 \\
Beam Search & 3.54 & \textbf{34} \\
\rowcolor{gray!10} 
Greedy & \textbf{3.64} & \textbf{34} \\
\bottomrule
\end{tabular}
\label{tab:hallucination}
\end{table}

\begin{table}[h]
\centering
\caption{Performance of different decoding strategies ($\tau=1.0$) on MMLU using Qwen2.5-VL (3B) and on CapArena using Qwen2.5-VL (7B).}
\label{tab:mmlu_caparena}
\scriptsize
\begin{tabular}{lccc}
\toprule
& \textbf{MMLU} & \multicolumn{2}{c}{\textbf{CapArena}} \\
\cmidrule(lr){2-2} \cmidrule(lr){3-4}
\textbf{Method} & \textbf{Acc (\%)} & \textbf{Avg. Score} & \textbf{Avg. Length} \\
\midrule
Temp' Only & 71.23 & -7.17 & 156.34 \\
\rowcolor{gray!10}
Top-$k$ & 71.23 & 5.17 & 158.33 \\
Top-$p$ (nucleus) & 71.23 & 2.17 & 155.06 \\
\rowcolor{gray!10}
Min-$p$ & 72.50 & 5.83 & 153.77 \\
$\varepsilon$-Sampling & 71.23 & 0.67 & 156.53 \\
\rowcolor{gray!10}
$\eta$-Sampling & 71.79 & 0.83 & 160.55 \\
\midrule
Greedy & \textbf{73.21} & \textbf{11.83} & 148.75 \\
\bottomrule
\end{tabular}
\end{table}

\begin{table*}[t]
\centering
\caption{Accuracy (in \%) of different decoding/sampling strategies on MMMU, ChartQA and BLINK benchmarks using Qwen3-VL-Thinking (4B).
Baseline: Stochastic Sampling and Beam Search.
GDRM: Greedy Decoding for Reasoning Models.
\textcolor{gray}{Greedy} is the Greedy Decoding Baseline.}
\label{tab:qwen3-4b-think-comparison}
\scriptsize
\setlength{\tabcolsep}{5pt}
\begin{tabular}{lcc cc cc}
\toprule
& \multicolumn{2}{c}{\textbf{MMMU}} & \multicolumn{2}{c}{\textbf{ChartQA}} & \multicolumn{2}{c}{\textbf{BLINK}} \\
\cmidrule(lr){2-3}\cmidrule(lr){4-5}\cmidrule(lr){6-7}
\textbf{Method} & Baseline & GDRM & Baseline & GDRM & Baseline & GDRM \\
\midrule
Temp' Only & 53.22 & 54.44 \deltaacc{+1.22} & 80.94 & 81.09 \deltaacc{+0.15} & 46.08 & 48.29 \deltaacc{+2.21} \\
\rowcolor{gray!10}
Top-$k$ & 54.11 & 54.78 \deltaacc{+0.67} & 80.78 & 81.46 \deltaacc{+0.68} & 44.77 & 47.71 \deltaacc{+2.94} \\
Top-$p$ (nucleus) & 54.56 & 54.67 \deltaacc{+0.11} & 81.56 & 81.82 \deltaacc{+0.26} & 45.13 & 45.98 \deltaacc{+0.85} \\
\rowcolor{gray!10}
Min-$p$ & 53.78 & 55.33 \deltaacc{+1.55} & 77.71 & 78.02 \deltaacc{+0.31} & 45.29 & 45.40 \deltaacc{+0.11} \\
$\varepsilon$-Sampling & 54.22 & 54.78 \deltaacc{+0.56} & 80.62 & 80.78 \deltaacc{+0.16} & 46.61 & 46.45 \deltaacc{-0.16} \\
\rowcolor{gray!10}
$\eta$-Sampling & 51.89 & 54.22 \deltaacc{+2.33} & 81.25 & 81.41 \deltaacc{+0.16} & 47.13 & 47.66 \deltaacc{+0.53} \\
Beam Search & 59.78 & 60.92 \deltaacc{+1.14} & 81.51 & 81.56 \deltaacc{+0.05} & 47.40 & 48.39 \deltaacc{+0.99} \\
\midrule
\textcolor{gray}{Greedy} & \multicolumn{2}{c}{\textcolor{gray}{54.11}} & \multicolumn{2}{c}{\textcolor{gray}{81.46}} & \multicolumn{2}{c}{\textcolor{gray}{45.71}} \\
\bottomrule
\end{tabular}
\end{table*}

\subsection{Analytical Results}
\label{sec:analytial_res}
\paragraph{Answer Distribution.}
We conduct a statistical analysis of the answer distributions across VQA and text-only QA datasets. MMMU, BLINK, and MMLU utilize a multiple-choice answer format, inducing answer distributions concentrated over specific tokens \texttt{A/B/C/D}. Although ChartQA permits more diverse responses, its answer distribution exhibits a similar highly concentrated pattern. Specifically, 75.16\% of answers are numerical. Among the non-numerical ones, the vast majority fall into limited categories: 66.04\% are single words, 23.48\% represent country names, and 11.74\% are binary responses. These results provide empirical evidence that VQA answer distributions are head-heavy, dominated by high-frequency tokens such as numbers, colors, and binary responses.

\paragraph{Empirical Evidence of Greedy Optimality.}
We demonstrate empirical evidence of Theorem~\ref{thr:cal} using the top-$k$ sampling. Figure~\ref{fig:utility-topk-by-top1} visualizes $G_1^k$ (\emph{i.e.}, $G^\alpha_1$ where $\alpha:=k$) across three models on ChartQA benchmark. We can observe that $G_1^k>0$ holds for all $k>1$. As a result, greedy decoding is provably the optimal decoding strategy under Condition \circled{1} in Theorem~\ref{thr:cal}, which is consistent with the experiment results demonstrated in Table~\ref{tab:qwen3-3b-bytemp},~\ref{tab:qwen3-7b-bytemp} and~\ref{tab:llava-7b-bytemp}. We observe similar behavior for other sampling methods. For instance, using top-$p$ sampling ($p=0.9$) with Qwen2.5-VL (3B) on ChartQA yields $G_1^p=0.173>0$ (\emph{i.e.}, $G^\alpha_1$ where $\alpha:=p$). 

\paragraph{Failure Analysis.}
We investigate the failure modes of stochastic sampling strategies by analyzing 177 cases where greedy decoding yields correct answers while stochastic decoding fails. Figure~\ref{fig:failure_mode} shows the detailed failure modes for both numeric and non-numeric cases. We can observe that numeric questions account for the majority of failure cases, with more than half of them exhibiting severe deviations ($>$20\% error) rather than minor inaccuracies. Further qualitative inspection reveals other specific failure modes, such as order-of-magnitude hallucinations (\emph{e.g.}, $2.6\to26000$), answer-type mismatch (\emph{e.g.}, $79\to$ \texttt{"Yes"}), and erroneous abstentions (\emph{e.g.}, $1.0\to$\texttt{"Unanswerable"}). 
Non-numeric failures exhibit similar tail-drift artifacts, including answer-type mismatch (\emph{e.g.}, \texttt{"Disapprove"} $\to 53$), color confusion (\emph{e.g.}, \texttt{"Light Blue"} $\to$ \texttt{"Gray"}), and logical flips (\emph{e.g.}, \texttt{"Yes"}$\to$\texttt{"No"}). These failure modes indicate that VQA distributions are head-heavy, concentrating valid answers within the high-probability region. Consequently, stochastic errors represent low-probability tail drift rather than semantically correct alternatives.

\paragraph{Hallucination.} 
We evaluate different decoding strategies on MM-HallBench using Qwen2.5-VL (3B). Table~\ref{tab:hallucination} compares answer score and hallucination rate under stochastic and deterministic decoding methods, including beam search and greedy decoding. Greedy decoding achieves the highest average answer score and the lowest hallucination rate. These results support a more precise interpretation of the effect of stochasticity: although truncation-based sampling removes part of the low-probability tail, it still assigns nonzero probability to lower-ranked candidates within the retained candidate set, which can be less calibrated or visually grounded than the top-$1$ candidate. A rank-wise analysis of the truncated candidate set provides further evidence for this behavior (Appendix~\ref{sec:appendix_chartqa}). Results reported by \citet{li2025hidden} on POPE~\cite{rohrbach2018object} provide additional corroborating evidence, with greedy decoding outperforming nucleus sampling across their evaluated models (Table~\ref{tab:acc_f1_pope}; Appendix~\ref{hall-pope}).

\paragraph{Calibration Error versus Model Performance.}
We calculate the calibration errors of MLLMs and analyze their relation with model performance. Figure~\ref{fig:ece-topk} illustrates the rank-conditioned calibration errors, denoted as $\text{ECE}^k$ ($\alpha:=k$). We observe distinct calibration patterns across models: Qwen2.5-VL (3B and 7B) exhibits strong calibration at the top-$1$ rank, with calibration degrading at larger $k$, whereas LLaVA-1.5 (7B) exhibits higher calibration error at $k=1$. On ChartQA, this poorer top-$1$ calibration corresponds to a smaller greedy advantage for LLaVA-1.5 (1.51 percentage points over the best stochastic strategy), compared with 1.99 and 1.97 points for Qwen2.5-VL 3B and 7B, respectively. More broadly, LLaVA-1.5 also provides a case in which a non-greedy strategy is preferable: on MMMU at $\tau=1.0$, top-$p$ sampling achieves 35.67\% accuracy compared with 34.78\% for greedy decoding. These observations illustrate the diagnostic role of calibration: weaker top-$1$ calibration can diminish or eliminate the greedy advantage, motivating empirical comparison of alternative strategies when the sufficient conditions in Theorem~\ref{thr:cal} are not certified.

\subsection{Text-only QA and Open-ended Generation}
\label{sec:text_qa}

Although motivated by the head-heavy nature of closed-ended VQA answer distributions, our analysis extends beyond this specific setting in two directions. First, the derivation of greedy optimality in Theorem~\ref{thr:cal} depends solely on the probabilistic structure of the answer-level posterior and remains mathematically valid when the visual input is omitted. Consequently, greedy decoding is optimal for text-only QA whenever either Condition~\circled{1} or Condition~\circled{2} holds. Evaluations on MMLU (Table~\ref{tab:mmlu_caparena}) support this extension: greedy decoding outperforms the evaluated stochastic strategies, and $G^k_1$ remains positive for all $k>1$ (Appendix~\ref{sec:appendix_mmlu}).

Second, as a separate empirical probe outside the verified scope of our theory, we evaluate detailed image captioning on CapArena~\cite{cheng2025caparena}. At $\tau=1.0$, greedy decoding achieves the highest average score while producing slightly shorter outputs (Table~\ref{tab:mmlu_caparena}). We emphasize that this is a narrow empirical result on one open-ended benchmark and do not claim that the sufficient conditions in Theorem~\ref{thr:cal} have been verified for CapArena. Extending the theoretical analysis to open-ended generation, where multiple semantically valid outputs may exist, is left for future work.

\section{Greedy Decoding for Reasoning Model}
\label{sec:gdrm}
\label{sec:thinking}
Recent advancements in MLLMs with reasoning capabilities, such as Qwen3-VL-Thinking~\cite{qwen3vl}, have demonstrated superior performance compared to non-reasoning models on various multimodal tasks. Conventionally, it is believed that introducing uncertainty is essential for the reasoning process, and thus greedy decoding is often considered suboptimal for these models~\cite{naik2024diversitythoughtimprovesreasoning}. However, motivated by Theorem~\ref{thr:cal}, we demonstrate that multimodal reasoning can still benefit from greedy decoding. Recent study shows that effective reasoning correlates with increasing token probabilities over the generation trajectory~\cite{token_signature}. Thus, while the internal reasoning process can exhibit significant stochasticity, the final outputs are often produced with high confidence (large $q(a^1|I, x)$), following sufficient deliberation. For VQA tasks with unique ground-truth answers, a high-confidence prediction that aligns with the correct answer results in a low calibration error $\text{ECE}^1$, which facilitates the satisfaction of both conditions outlined in Theorem~\ref{thr:cal}. 

Following the aforementioned theoretical insight, we propose GDRM. Specifically, by incorporating the generated reasoning trace $r$ into the input context, we transform the original reasoning VQA task $(I, x)$ into a standard prediction task $(I, x + r)$. We then apply greedy decoding to generate the final answer tokens given this augmented context. The formal definition of our proposed GDRM is given as follows:
\begin{equation*}
\begin{split}
    & \left\{
    \begin{aligned}
        & r \sim \textsc{Sampling}\!\left(p_{\theta}(\cdot \mid I, x)\right),
        \,\, \text{reasoning tokens} \\
        & \hat{y} = \textsc{Greedy}\!\left(p_{\theta}(\cdot \mid I, x + r)\right),
        \,\, \text{answer tokens}
    \end{aligned}
    \right.
\end{split}
\end{equation*}
where $r$ denotes the reasoning tokens, $\hat{y}$ denotes the answer tokens, \textsc{Sampling} denotes any given sampling strategy, and $\textsc{Greedy}$ denotes greedy decoding strategy.


We evaluate GDRM on VQA benchmarks using Qwen3-VL-Thinking (4B) model.
As shown in Table~\ref{tab:qwen3-4b-think-comparison}, GDRM consistently outperforms standard stochastic sampling across all benchmarks, notably boosting Temperature Sampling on BLINK by over 2\% by decoupling reasoning exploration from answer generation. Furthermore, it improves upon the naive greedy decoding baseline (\emph{e.g.}, reaching 60.92\% on MMMU vs.\ 54.11\% baseline), demonstrating that combining probabilistic/beam search reasoning with deterministic answering is superior to a fully deterministic approach. Additional results on Qwen3-VL-30B-A3B-Thinking (Table~\ref{tab:gdrm_30b_accuracy}; Appendix~\ref{sec:gdrm_30b}) show that GDRM consistently outperforms all stochastic baselines, demonstrating its scalability to larger models. In summary, GDRM offers a greater stability, effectively boosting the reasoning model performance by anchoring the answer tokens to the top-$1$ prediction.

\section{Conclusion}
We revisit greedy decoding for MLLMs on VQA from a calibration perspective. We demonstrate that the head-heavy answer distribution and epistemic uncertainty inherent to VQA make stochastic sampling suboptimal, as it often expands the candidate set toward low-probability distractors rather than semantically correct alternatives. Our theoretical framework formalizes the relationship between model calibration and predictive accuracy, identifying sufficient conditions under which greedy decoding is optimal. Our extensive experiments across diverse VQA benchmarks demonstrate the superior performance of greedy decoding to stochastic sampling. Furthermore, we propose GDRM, a decoding strategy that anchors final answers to top-$1$ predictions without compromising the model's underlying reasoning capabilities.

Together, our findings caution against naively inheriting LLMs decoding heuristics in MLLMs without task-specific justification and demonstrate that greedy decoding can be an efficient yet strong default for VQA. 


\section*{Limitations}

\paragraph{Observability of sufficient conditions.}
Theorem~\ref{thr:cal} is defined with respect to the ground-truth posterior $p(a\mid I,x)$, which is not directly observable in practice. We therefore estimate the required calibration quantities from labeled development set. Although our held-out experiment demonstrates that a small development set can provide a useful pre-hoc diagnostic, finite-sample estimation error and distribution shift between development and test data may affect its reliability.

\paragraph{Task and benchmark scope.}
Our primary conclusions concern closed-ended VQA with head-heavy answer distributions and predominantly epistemic uncertainty. The CapArena experiment is only a preliminary empirical probe of open-ended generation, and we do not verify the conditions of Theorem~\ref{thr:cal} in that setting. Our benchmarks are also English-language and do not cover highly specialized domains such as medical VQA; broader evaluation across languages, domains, and tasks with multiple valid answers remains future work.

\paragraph{Decoding strategy scope.}
The theoretical analysis applies to the stochastic sampling strategies parameterized by $\alpha$ and does not establish optimality over search- or ensemble-based methods such as beam search or self-consistency, which we consider only empirically or leave for future comparison. Likewise, although GDRM improves its corresponding stochastic or beam-search baseline in nearly all tested configurations, it does not outperform plain greedy decoding in every setting.

\section*{Acknowledgement}
This research was primarily supported by the ETH AI Center through an ETH AI Center doctoral fellowship to Boqi Chen and Yunke Ao. We would like to thank Xueyan Li for insightful discussions. 

\bibliography{custom}

@inproceedings{sun2024aligning,
  title={Aligning large multimodal models with factually augmented rlhf},
  author={Sun, Zhiqing and Shen, Sheng and Cao, Shengcao and Liu, Haotian and Li, Chunyuan and Shen, Yikang and Gan, Chuang and Gui, Liangyan and Wang, Yu-Xiong and Yang, Yiming and Keutzer, Kurt and Darrell, Trevor},
  booktitle={Findings of the Association for Computational Linguistics: ACL 2024},
  pages={13088--13110},
  year={2024}
}

@inproceedings{token_signature,
  title     = {Token Signature: Predicting Chain-of-Thought Gains with Token Decoding Feature in Large Language Models},
  author    = {Peijie Liu and Fengli Xu and Yong Li},
  booktitle = {International Conference on Machine Learning (ICML)},
  year      = {2025}
}

@misc{naik2024diversitythoughtimprovesreasoning,
      title={Diversity of Thought Improves Reasoning Abilities of LLMs}, 
      author={Ranjita Naik and Varun Chandrasekaran and Mert Yuksekgonul and Hamid Palangi and Besmira Nushi},
      year={2024},
      eprint={2310.07088},
      archivePrefix={arXiv},
      primaryClass={cs.CL},
      url={https://arxiv.org/abs/2310.07088}, 
}

@article{mmlu,
      title={Measuring Massive Multitask Language Understanding},
      author={Dan Hendrycks and Collin Burns and Steven Basart and Andy Zou and Mantas Mazeika and Dawn Song and Jacob Steinhardt},
      journal={Proceedings of the International Conference on Learning Representations (ICLR)},
      year={2021}
    }

@inproceedings{fu2024blink,
  title={Blink: Multimodal large language models can see but not perceive},
  author={Fu, Xingyu and Hu, Yushi and Li, Bangzheng and Feng, Yu and Wang, Haoyu and Lin, Xudong and Roth, Dan and Smith, Noah A and Ma, Wei-Chiu and Krishna, Ranjay},
  booktitle={European Conference on Computer Vision},
  pages={148--166},
  year={2024},
  organization={Springer}
}

@article{masry2022chartqa,
  title={Chartqa: A benchmark for question answering about charts with visual and logical reasoning},
  author={Masry, Ahmed and Long, Do Xuan and Tan, Jia Qing and Joty, Shafiq and Hoque, Enamul},
  journal={arXiv preprint arXiv:2203.10244},
  year={2022}
}

@inproceedings{yue2024mmmu,
  title={Mmmu: A massive multi-discipline multimodal understanding and reasoning benchmark for expert agi},
  author={Yue, Xiang and Ni, Yuansheng and Zhang, Kai and Zheng, Tianyu and Liu, Ruoqi and Zhang, Ge and Stevens, Samuel and Jiang, Dongfu and Ren, Weiming and Sun, Yuxuan and Wei, Cong and Yu, Botao and Yuan, Ruibin and Sun, Renliang and Yin, Ming and Zheng, Boyuan and Yang, Zhenzhu and Liu, Yibo and Huang, Wenhao and Sun, Huan and Su, Yu and Chen, Wenhu},
  booktitle={Proceedings of the IEEE/CVF Conference on Computer Vision and Pattern Recognition},
  pages={9556--9567},
  year={2024}
}

@article{bai2025qwen2,
  title={Qwen2. 5-vl technical report},
  author={Bai, Shuai and Chen, Keqin and Liu, Xuejing and Wang, Jialin and Ge, Wenbin and Song, Sibo and Dang, Kai and Wang, Peng and Wang, Shijie and Tang, Jun and Zhong, Humen and Zhu, Yuanzhi and Yang, Mingkun and Li, Zhaohai and Wan, Jianqiang and Wang, Pengfei and Ding, Wei and Fu, Zheren and Xu, Yiheng and Ye, Jiabo and Zhang, Xi and Xie, Tianbao and Cheng, Zesen and Zhang, Hang and Yang, Zhibo and Xu, Haiyang and Lin, Junyang},
  journal={arXiv preprint arXiv:2502.13923},
  year={2025}
}

@inproceedings{Liu2023LLaVA,
  title     = {Visual Instruction Tuning},
  author    = {Liu, Haotian and Li, Chunyuan and Wu, Qingyang and Lee, Yong Jae},
  booktitle = {Advances in Neural Information Processing Systems (NeurIPS)},
  year      = {2023}
}

@inproceedings{holtzman2019curious,
  title     = {The Curious Case of Neural Text Degeneration},
  author    = {Holtzman, Ari and Buys, Jan and Du, Li and Forbes, Maxwell and Choi, Yejin},
  booktitle = {International Conference on Learning Representations (ICLR)},
  year      = {2020},
  note      = {arXiv:1904.09751},
  url       = {https://openreview.net/forum?id=rygGQyrFvH}
}

@inproceedings{fan2018hierarchical,
  title     = {Hierarchical Neural Story Generation},
  author    = {Fan, Angela and Lewis, Mike and Dauphin, Yann N.},
  booktitle = {Proceedings of the 56th Annual Meeting of the Association for Computational Linguistics (ACL)},
  year      = {2018},
  pages     = {889--898},
  publisher = {Association for Computational Linguistics},
  doi       = {10.18653/v1/P18-1082},
  url       = {https://aclanthology.org/P18-1082/}
}

@inproceedings{hewitt2022desmoothing,
  title     = {Truncation Sampling as Language Model Desmoothing},
  author    = {Hewitt, John and Manning, Christopher D. and Liang, Percy},
  booktitle = {Findings of the Association for Computational Linguistics: EMNLP 2022},
  year      = {2022},
  pages     = {3388--3403},
  publisher = {Association for Computational Linguistics},
  url       = {https://aclanthology.org/2022.findings-emnlp.249/}
}

@article{nguyen2024minp,
  title   = {Turning Up the Heat: Min-\$p\$ Sampling for Creative and Coherent LLM Outputs},
  author  = {Nguyen, Minh Nhat and Baker, Andrew and Neo, Clement and Roush, Allen and Kirsch, Andreas and Shwartz-Ziv, Ravid},
  journal = {arXiv preprint arXiv:2407.01082},
  year    = {2024},
  url     = {https://arxiv.org/abs/2407.01082}
}

@inproceedings{antol2015vqa,
  title     = {{VQA}: Visual Question Answering},
  author    = {Antol, Stanislaw and Agrawal, Aishwarya and Lu, Jiasen and Mitchell, Margaret and Batra, Dhruv and Zitnick, C. Lawrence and Parikh, Devi},
  booktitle = {Proceedings of the IEEE International Conference on Computer Vision (ICCV)},
  year      = {2015},
  pages     = {2425--2433},
  doi       = {10.1109/ICCV.2015.279},
  url       = {https://openaccess.thecvf.com/content_iccv_2015/html/Antol_VQA_Visual_Question_ICCV_2015_paper.html}
}

@inproceedings{whitehead2022reliable,
  title     = {Reliable Visual Question Answering: Abstain Rather Than Answer Incorrectly},
  author    = {Whitehead, Spencer and Petryk, Suzanne and Shakib, Vedaad and Gonzalez, Joseph and Darrell, Trevor and Rohrbach, Anna and Rohrbach, Marcus},
  booktitle = {European Conference on Computer Vision (ECCV)},
  year      = {2022},
  url       = {https://www.ecva.net/papers/eccv_2022/papers_ECCV/papers/136960146.pdf}
}

@inproceedings{song2025good,
  title     = {The Good, The Bad, and The Greedy: Evaluation of {LLM}s Should Not Ignore Non-Determinism},
  author    = {Song, Yifan and Wang, Guoyin and Li, Sujian and Lin, Bill Yuchen},
  booktitle = {Proceedings of the 2025 Conference of the North American Chapter of the Association for Computational Linguistics: Human Language Technologies (NAACL-HLT)},
  year      = {2025},
  publisher = {Association for Computational Linguistics},
  doi       = {10.18653/v1/2025.naacl-long.211},
  url       = {https://aclanthology.org/2025.naacl-long.211/}
}

@article{Wang2023CalibrationSurvey,
  title        = {Calibration in Deep Learning: A Survey of the State‐of‐the‐Art},
  author       = {Wang, Cheng},
  journal      = {arXiv preprint},
  volume       = {arXiv:2308.01222},
  year         = {2023},
  note         = {Revised version v3, submitted 2 August 2023, last revised 10 May 2024},
  url          = {https://arxiv.org/abs/2308.01222}
}

@inproceedings{khan2024consistency,
  title={Consistency and uncertainty: Identifying unreliable responses from black-box vision-language models for selective visual question answering},
  author={Khan, Zaid and Fu, Yun},
  booktitle={Proceedings of the IEEE/CVF Conference on Computer Vision and Pattern Recognition},
  pages={10854--10863},
  year={2024}
}

@article{li2023evaluating,
  title={Evaluating object hallucination in large vision-language models},
  author={Li, Yifan and Du, Yifan and Zhou, Kun and Wang, Jinpeng and Zhao, Wayne Xin and Wen, Ji-Rong},
  journal={arXiv preprint arXiv:2305.10355},
  year={2023}
}

@inproceedings{leng2024mitigating,
  title={Mitigating object hallucinations in large vision-language models through visual contrastive decoding},
  author={Leng, Sicong and Zhang, Hang and Chen, Guanzheng and Li, Xin and Lu, Shijian and Miao, Chunyan and Bing, Lidong},
  booktitle={Proceedings of the IEEE/CVF Conference on Computer Vision and Pattern Recognition},
  pages={13872--13882},
  year={2024}
}

@inproceedings{favero2024multi,
  title={Multi-modal hallucination control by visual information grounding},
  author={Favero, Alessandro and Zancato, Luca and Trager, Matthew and Choudhary, Siddharth and Perera, Pramuditha and Achille, Alessandro and Swaminathan, Ashwin and Soatto, Stefano},
  booktitle={Proceedings of the IEEE/CVF Conference on Computer Vision and Pattern Recognition},
  pages={14303--14312},
  year={2024}
}

@inproceedings{an2025mitigating,
  title={Mitigating object hallucinations in large vision-language models with assembly of global and local attention},
  author={An, Wenbin and Tian, Feng and Leng, Sicong and Nie, Jiahao and Lin, Haonan and Wang, QianYing and Chen, Ping and Zhang, Xiaoqin and Lu, Shijian},
  booktitle={Proceedings of the Computer Vision and Pattern Recognition Conference},
  pages={29915--29926},
  year={2025}
}

@inproceedings{su2025activation,
  title={Activation Steering Decoding: Mitigating Hallucination in Large Vision-Language Models through Bidirectional Hidden State Intervention},
  author={Su, Jingran and Chen, Jingfan and Li, Hongxin and Chen, Yuntao and Qing, Li and Zhang, Zhaoxiang},
  booktitle={Proceedings of the 63rd Annual Meeting of the Association for Computational Linguistics (Volume 1: Long Papers)},
  pages={12964--12974},
  year={2025}
}

@inproceedings{guo2017calibration,
  title     = {On Calibration of Modern Neural Networks},
  author    = {Guo, Chuan and Pleiss, Geoff and Sun, Yu and Weinberger, Kilian Q.},
  booktitle = {Proceedings of the 34th International Conference on Machine Learning (ICML)},
  year      = {2017},
  editor    = {Precup, Doina and Teh, Yee Whye},
  volume    = {70},
  series    = {Proceedings of Machine Learning Research},
  pages     = {1321--1330},
  publisher = {PMLR},
  url       = {https://proceedings.mlr.press/v70/guo17a.html}
}

@article{brier1950verification,
  title     = {Verification of Forecasts Expressed in Terms of Probability},
  author    = {Brier, Glenn W.},
  journal   = {Monthly Weather Review},
  volume    = {78},
  number    = {1},
  pages     = {1--3},
  year      = {1950},
  publisher = {American Meteorological Society},
  doi       = {10.1175/1520-0493(1950)078<0001:VOFEIT>2.0.CO;2}
}

@article{li2025sample,
  title={Sample Smart, Not Hard: Correctness-First Decoding for Better Reasoning in LLMs},
  author={Li, Xueyan and Su, Guinan and Sachan, Mrinmaya and Geiping, Jonas},
  journal={arXiv preprint arXiv:2510.05987},
  year={2025}
}

@article{ghosh2024visual,
  title={Visual description grounding reduces hallucinations and boosts reasoning in lvlms},
  author={Ghosh, Sreyan and Evuru, Chandra Kiran Reddy and Kumar, Sonal and Tyagi, Utkarsh and Nieto, Oriol and Jin, Zeyu and Manocha, Dinesh},
  journal={arXiv preprint arXiv:2405.15683},
  year={2024}
}

@misc{qwen3vl,
      title={Qwen3-VL Technical Report}, 
      author={Shuai Bai and Yuxuan Cai and Ruizhe Chen and Keqin Chen and Xionghui Chen and Zesen Cheng and Lianghao Deng and Wei Ding and Chang Gao and Chunjiang Ge and Wenbin Ge and Zhifang Guo and Qidong Huang and Jie Huang and Fei Huang and Binyuan Hui and Shutong Jiang and Zhaohai Li and Mingsheng Li and Mei Li and Kaixin Li and Zicheng Lin and Junyang Lin and Xuejing Liu and Jiawei Liu and Chenglong Liu and Yang Liu and Dayiheng Liu and Shixuan Liu and Dunjie Lu and Ruilin Luo and Chenxu Lv and Rui Men and Lingchen Meng and Xuancheng Ren and Xingzhang Ren and Sibo Song and Yuchong Sun and Jun Tang and Jianhong Tu and Jianqiang Wan and Peng Wang and Pengfei Wang and Qiuyue Wang and Yuxuan Wang and Tianbao Xie and Yiheng Xu and Haiyang Xu and Jin Xu and Zhibo Yang and Mingkun Yang and Jianxin Yang and An Yang and Bowen Yu and Fei Zhang and Hang Zhang and Xi Zhang and Bo Zheng and Humen Zhong and Jingren Zhou and Fan Zhou and Jing Zhou and Yuanzhi Zhu and Ke Zhu},
      year={2025},
      eprint={2511.21631},
      archivePrefix={arXiv},
      primaryClass={cs.CV},
      url={https://arxiv.org/abs/2511.21631}, 
}

@article{li2025hidden,
  title={The hidden life of tokens: Reducing hallucination of large vision-language models via visual information steering},
  author={Li, Zhuowei and Shi, Haizhou and Gao, Yunhe and Liu, Di and Wang, Zhenting and Chen, Yuxiao and Liu, Ting and Zhao, Long and Wang, Hao and Metaxas, Dimitris N},
  journal={arXiv preprint arXiv:2502.03628},
  year={2025}
}

@inproceedings{cheng2025caparena,
  title={Caparena: Benchmarking and analyzing detailed image captioning in the llm era},
  author={Cheng, Kanzhi and Song, Wenpo and Fan, Jiaxin and Ma, Zheng and Sun, Qiushi and Xu, Fangzhi and Yan, Chenyang and Chen, Nuo and Zhang, Jianbing and Chen, Jiajun},
  booktitle={Findings of the Association for Computational Linguistics: ACL 2025},
  pages={14077--14094},
  year={2025}
}

@article{zhu2023minigpt,
  title={Minigpt-4: Enhancing vision-language understanding with advanced large language models},
  author={Zhu, Deyao and Chen, Jun and Shen, Xiaoqian and Li, Xiang and Elhoseiny, Mohamed},
  journal={arXiv preprint arXiv:2304.10592},
  year={2023}
}

@article{chen2023shikra,
  title={Shikra: Unleashing multimodal llm's referential dialogue magic},
  author={Chen, Keqin and Zhang, Zhao and Zeng, Weili and Zhang, Richong and Zhu, Feng and Zhao, Rui},
  journal={arXiv preprint arXiv:2306.15195},
  year={2023}
}

@article{dai2023instructblip,
  title={Instructblip: Towards general-purpose vision-language models with instruction tuning},
  author={Dai, Wenliang and Li, Junnan and Li, Dongxu and Tiong, Anthony and Zhao, Junqi and Wang, Weisheng and Li, Boyang and Fung, Pascale N and Hoi, Steven},
  journal={Advances in neural information processing systems},
  volume={36},
  pages={49250--49267},
  year={2023}
}

@inproceedings{rohrbach2018object,
  title={Object hallucination in image captioning},
  author={Rohrbach, Anna and Hendricks, Lisa Anne and Burns, Kaylee and Darrell, Trevor and Saenko, Kate},
  booktitle={Proceedings of the 2018 Conference on Empirical Methods in Natural Language Processing},
  pages={4035--4045},
  year={2018}
}

\appendix
\section{Theory Details}
\subsection{Calibration metrics}
\label{sec:cal_metric}

\paragraph{Expected Calibration Error (ECE)} 
ECE partitions the predicted probability space into $N$ bins, $b_n := \{i \mid q(\hat{a}_i|I_i, x_i) \in U_n\}$, where $U_n$ denotes a predefined probability interval for the $n$-th bin. It computes the weighted average of the discrepancy between empirical accuracy and average confidence within each bin:
\begin{equation}
    \text{ECE} := \sum_{n=1}^N \frac{|b_n|}{M} \left| \text{acc}(b_n) - \text{conf}(b_n) \right|,
\end{equation}
where $M$ is the total number of samples, and
\begin{align}
    \text{acc}(b_n) &:= \frac{1}{|b_n|} \sum_{i \in b_n} \mathbf{1}(\hat{a}_i = a^*_i), \\
    \text{conf}(b_n) &:= \frac{1}{|b_n|} \sum_{i \in b_n} q(\hat{a}_i | I_i, x_i).
\end{align}
Here, $\hat{a}_i$ is the model's prediction and $a^*_i$ is the ground truth label. Conceptually, $\text{acc}(b_n)$ approximates the expectation $\mathbb{E}[p(a|I,x) \mid q(a|I,x) \in U_n]$, while $\text{conf}(b_n)$ approximates $\mathbb{E}[q(a|I,x) \mid q(a|I,x) \in U_n]$. A more thorough mathematical investigation of ECE is provided in Appendix~\ref{sec:cal_metric}. 

We provide more detailed mathmatical explanation for ECE and BS in the following propositions.

\begin{proposition}
Assuming that $q(a|I, x)$ takes a unique value for each $a, I, x$, and that we have access to infinite data for every $(I, x)$.
Consider constructing one bin for each unique $q(a|I, x)$ value.
    ECE in our problem setup approximates  $\mathbb{E}_{I, x, a\sim p(\cdot|I, x)}[|q(a|I, x) - p(a|I, x)|]$ \label{prop:ece}
\end{proposition}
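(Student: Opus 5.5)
The approach is to write the binned ECE as an empirical average and pass to the population limit bin by bin. Under the uniqueness assumption each bin $b_n$ corresponds to exactly one triple $(I,x,a)$ with $q(a|I,x)=u_n$. Under infinite data, the samples falling in this bin are all repeated draws of the same $(I,x)$, each carrying a ground-truth label $a^*\sim p(\cdot|I,x)$.

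The first step is to fix the sampling model implicit in the statement. Each data point is a tuple $(I_i,x_i,a^*_i)$ with $a^*_i\sim p(\cdot|I_i,x_i)$. The indexed answer $\hat a_i$ whose confidence determines the bin is taken to be distributed as $p(\cdot|I_i,x_i)$, so that the outer expectation becomes $a\sim p$. I will state this interpretation explicitly, since it is what makes the target expectation $\mathbb{E}_{I,x,a\sim p}$ rather than $\mathbb{E}_{a\sim q}$.

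The second step computes the two bin statistics. Since $q(\hat a_i|I_i,x_i)$ is constant on the bin, $\text{conf}(b_n)=q(a|I,x)$ exactly. Every sample in the bin shares the same $(I,x)$, and the labels $a^*_i$ are i.i.d. from $p(\cdot|I,x)$. Hence, by the law of large numbers, $\text{acc}(b_n)\to \Pr(a^*=a)=p(a|I,x)$ as $|b_n|\to\infty$. This makes the informal interpretation given after the ECE definition precise, because the conditioning event $\{q=u_n\}$ pins down $(I,x,a)$ uniquely.

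The third step handles the weights. By the law of large numbers again, $|b_n|/M$ converges to the probability that a draw lands on the triple $(I,x,a)$, which is $P(I,x)\,p(a|I,x)$. Summing over bins then gives
\[
\text{ECE}\to\sum_{I,x,a}P(I,x)\,p(a|I,x)\,\bigl|q(a|I,x)-p(a|I,x)\bigr| = \mathbb{E}_{I,x,a\sim p(\cdot|I,x)}\bigl[|q(a|I,x)-p(a|I,x)|\bigr].
\]

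The main obstacle is justifying the two limits jointly. We need the per-bin accuracy and the bin frequencies to converge simultaneously, and the sum over possibly countably many bins must also converge. I would first treat finitely many $(I,x)$, where convergence is immediate since $\mathcal{A}$ is finite. For the general case I would use that each summand is bounded by the corresponding weight, since $|q-p|\le 1$, and apply dominated convergence. I will also remark that the argument is an approximation statement, which is why the proposition says ``approximates'' rather than ``equals.''
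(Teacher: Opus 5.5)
Your proposal is correct and follows essentially the same route as the paper's proof: identify each bin with a single triple $(I,x,a)$, note $\text{conf}(b_n)=q(a|I,x)$, $\text{acc}(b_n)\approx p(a|I,x)$ and bin weight $\approx p(I,x,a)$, then sum to get $\mathbb{E}_{I,x,a\sim p}[|q-p|]$. What you add is detail the paper's single ``$\approx$'' leaves implicit. First, the binned answer $\hat a_i$ must be drawn from $p(\cdot|I,x)$ \emph{independently} of the label $a^*_i$; you use this when you say the labels in a bin are still i.i.d.\ from $p$, so state it explicitly, because with a greedy $\hat a_i$ you would recover $\text{ECE}^1$ instead. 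Second, you justify the limit interchange, where the bound $|b_n|/M$ varies with $M$; that calls for the generalized (Pratt/Scheff\'e) form of dominated convergence, which applies because both weight families sum to one.
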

\begin{proof}
Since $q(a|I, x)$ is unique for each $(I, x, a)$ and we assign one bin per $q(a|I, x)$, each bin corresponds to a specific $(I, x, a)$ combination.
Then we have:
    \begin{align*}
    &\text{ECE}= \sum_{n=1}^N  \frac{|b_n|}{N} |\text{acc}(b_n) - \text{conf}(b_n)| \\
    \approx& \sum_{I, x, a}  p(I, x, a)|p(a|I, x)-q(a|I,x)| \\
    = &\mathbb{E}_{I, x, a\sim p(\cdot|I, x)}[|q(a|I, x) - p(a|I, x)|].
\end{align*}
\end{proof}

\paragraph{Brier Score (BS)}
The Brier Score measures the mean squared error between the predicted probability and the actual label. While the classical Brier Score assumes deterministic labels, we can derive an adapted version that accounts for the discrepancy between distributions using a binning approach:
\begin{equation}
    \text{BS} := \sum_{n=1}^N \frac{|b_n|}{M} \frac{|\text{acc}(b_n) - \text{conf}(b_n)|^2}{\text{acc}(b_n)}.
\end{equation}
This metric approximates the expected squared difference between the predicted and true probabilities across the answer space, averaged over the dataset. A more thorough mathematical investigation of BS is also provided in Appendix~\ref{sec:cal_metric}. 

To analyze the calibration of a specific parameterized strategy $\alpha$, we extend the above metrics into a continuous expectation framework.

\begin{proposition}
    Assuming that $q(a|I, x)$ takes a unique value for each $a, I, x$, and that we have access to infinite data for every $(I, x)$.
Consider constructing one bin for each unique $q(a|I, x)$ value. 
Then BS in our problem setup approximates  $\mathbb{E}_{I, x}[\sum_{a}|q(a|I, x) - p(a|I, x)|^2]$.
\end{proposition}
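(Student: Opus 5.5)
I will follow the template of the proof of Proposition~\ref{prop:ece}. Since $q(a|I,x)$ takes a distinct value for each triple $(I,x,a)$ and we place one bin per value, each bin $b_n$ contains exactly the occurrences of a single triple $(I,x,a)$. With infinitely many samples per $(I,x)$, the within-bin statistics converge to their population values. Thus $\text{conf}(b_n)=q(a|I,x)$ exactly, and $\text{acc}(b_n)\to p(a|I,x)$, since the empirical frequency with which the label equals $a$ for that input converges to the ground-truth posterior. Substituting into the binned definition gives
\begin{equation*}
\text{BS}\approx\sum_{I,x,a}w(I,x,a)\,\frac{|q(a|I,x)-p(a|I,x)|^2}{p(a|I,x)},
\end{equation*}
where $w(I,x,a)=|b_n|/M$ is the limiting fraction of samples falling in that bin.

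The key step is to identify the weight $w$. As in Proposition~\ref{prop:ece}, I will take the bin mass to be $p(I,x,a)=p(I,x)\,p(a|I,x)$. This corresponds to counting a sample toward the bin of the answer $a$ that is realized as the label, so bins are populated according to the ground-truth joint distribution. With this choice the factor $p(a|I,x)$ in the weight cancels the denominator $\text{acc}(b_n)=p(a|I,x)$. That cancellation is exactly why the denominator appears in the adapted definition of BS. The sum then becomes
\begin{equation*}
\sum_{I,x}p(I,x)\sum_a|q(a|I,x)-p(a|I,x)|^2=\mathbb{E}_{I,x}\Big[\sum_a|q(a|I,x)-p(a|I,x)|^2\Big].
\end{equation*}

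The main obstacle is justifying the weighting convention rigorously. In the binned definition, the sample $i$ is indexed by a prediction $\hat a_i$, so one must argue that, in the infinite-data idealization, every answer $a$ with $p(a|I,x)>0$ populates its own bin with mass $p(I,x)p(a|I,x)$. This amounts to enumerating all candidate answers per input, consistent with the reading used in Proposition~\ref{prop:ece}.

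Two edge cases remain, and I will handle both briefly. Answers with $p(a|I,x)=0$ contribute an undefined $0/0$-type term to BS but $q(a|I,x)^2$ to the target. I will either assume full support of $p$ or interpret the term by continuity, noting that the statement is only an approximation. Finally, I will remark on the $q$-normalized variant $\text{BS}^\alpha$. Weighting by $q^\alpha$ instead of $p$ motivates replacing the denominator with $q$, which is the substitution used in Eq.~\ref{def:bs_a}.
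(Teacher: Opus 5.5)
Your derivation follows the paper's proof step for step. You use one bin per triple $(I,x,a)$, $\text{conf}(b_n)=q(a|I,x)$, $\text{acc}(b_n)\to p(a|I,x)$, bin mass $p(I,x,a)=p(I,x)\,p(a|I,x)$, and cancel $p(a|I,x)$ against the denominator. The paper simply asserts that bin mass without saying what produces it. The justification you add is therefore where your proposal goes beyond the paper, and as written it does not hold together.

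Bins are indexed by the evaluated answer $\hat a_i$, so routing a sample to the bin of its realized label $a$ means setting $\hat a_i=a^*_i$. Then every sample in the bin is correct and $\text{acc}(b_n)=1$, not $p(a|I,x)$: you get the right mass but the wrong accuracy. Your alternative reading, enumerating every candidate answer for each input, gives the right accuracy but the wrong mass. Bin $(I,x,a)$ then receives mass proportional to $p(I,x)$ regardless of $a$, so the factor $1/p(a|I,x)$ survives. The limit becomes $\mathbb{E}_{I,x}[\sum_a |q(a|I,x)-p(a|I,x)|^2/p(a|I,x)]$ (up to a $1/|\mathcal{A}|$ normalization), not the target.

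The mechanism that delivers both quantities at once is to draw the evaluated answer as $\hat a_i\sim p(\cdot|I_i,x_i)$ \emph{independently} of the label $a^*_i\sim p(\cdot|I_i,x_i)$. Then bin $(I,x,a)$ has limiting mass $p(I,x)p(a|I,x)$ and within-bin accuracy $\Pr(a^*=a\mid I,x)=p(a|I,x)$. This is exactly the $a\sim p(\cdot|I,x)$ in the paper's intermediate line (and in Proposition~\ref{prop:ece}). More generally, if $\hat a_i\sim r(\cdot|I_i,x_i)$ independently of the label, the limit is $\mathbb{E}_{I,x}[\sum_a r(a|I,x)\,|q(a|I,x)-p(a|I,x)|^2/p(a|I,x)]$, so the cancellation requires $r=p$. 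With $r=q$ the denominator would have to be $q$, which is the precise version of your closing remark about $\text{BS}^\alpha$. Replace the label-routing sentence with this independence assumption.

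Your handling of answers with $p(a|I,x)=0$ improves on the paper. Under this mechanism such answers are never evaluated, so BS omits $\sum_{a:\,p(a|I,x)=0}q(a|I,x)^2$. The paper's final equality therefore tacitly needs full support of $p$.
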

\begin{proof}
    The proof follows similar steps to the proof of Proposition~\ref{prop:ece}:
    \begin{align*}
        &\text{BS}\frac{1}{N} \sum_{n=1}^N |b_n| \frac{|\text{acc}(b_n) - \text{conf}(b_n)|^2}{\text{acc}(b_n)}\\
        \approx& \sum_{I, x, a}  p(I, x, a)\frac{|q(a|I, x_i) - p(a|I, x)|^2}{p(a|I, x)} \ \\
        \approx & \mathbb{E}_{I, x, a\sim p(\cdot|I, x)}\left[\frac{|q(a|I, x_i) - p(a|I, x)|^2}{p(a|I, x)}\right] \\
        = &\mathbb{E}_{I, x}[\sum_{a}|q(a|I, x) - p(a|I, x)|^2]
    \end{align*}
\end{proof}
These results motivate our definition of $ECE^\alpha$ and $BS^\alpha$ with $p$ and $q$.

\subsection{Strategy-specific Brier Score}
\label{supp-bsk}
The strategy-specific Brier Score is defined as the expected normalized squared difference:
\begin{align}
    \text{BS}^\alpha &:= \mathbb{E}_{I, x, a \sim q^\alpha(\cdot|I, x)} \left[ \frac{|q(a|I, x) - p(a|I, x)|^2}{q(a|I, x)} \right], \label{def:bs_a_app}\\
    \text{BS}^1 &:= \mathbb{E}_{I, x} \left[ \frac{|q(a^1|I, x) - p(a^1|I, x)|^2}{q(a^1|I, x)} \right],\label{def:bs_1_app}
\end{align}
where $\text{BS}^1$ is the calibration error of the greedy strategy (always taking the token with the highest confidence). 

In the definition, we substitute the denominator $p(a|I, x)$ in the original definition of BS (Appendix~\ref{sec:cal_metric}) with $q(a|I, x)$. Since the denominator serves only as a positive weighting over answers, both of them yield a valid calibration error, differing only in how deviations are weighted across answers. We choose $q(a|I, x)$ for two reasons. First, it is known and strictly positive for top-$k$ answers, making the metric stable and directly estimable, whereas $p(a|I, x)$ is unknown and may be arbitrarily small or zero. Second, it enables algebraic decomposition in the proof of Theorem~\ref{thr:cal} (see Appendix~\ref{sec:proof} for details): the bound leading to Eq.~\ref{cond:2} naturally yields a $\mathbb{E}[|q-p|^2/q]$ term (\emph{i.e.}, our $\text{BS}^\alpha$), whereas utilizing $p(a|I, x)$ would break this tractable bound and obscure the resulting sufficient condition.

\subsection{Proof of Theorem~\ref{thr:cal}}
\label{sec:proof}
\begin{proof}
We first introduce the correctness metric ($C^\alpha$) for the strategy with the parameter $\alpha$:
\begin{align}
    C^\alpha:= \mathbb{E}_{I,x,a\sim q^\alpha(\cdot|I, x)}[p(a|I,x)] \label{def:c_k}
\end{align}
    We start by bounding $C^\alpha$ using $\text{ECE}^\alpha$ and $q$:
    \begin{align*}
        &C^\alpha=\mathbb{E}_{I,x,a\sim q^\alpha(\cdot|I, x)}[p(a|I,x)] \\
        =&\mathbb{E}_{I,x,,a\sim q^\alpha(\cdot|I, x)}[q(a|I,x) \\
        &+ (p(a|I,x)-q(a|I,x))] \\
        \leq &\mathbb{E}_{I,x,a\sim q^\alpha(\cdot|I, x)}[q(a|I,x) \\
        &+ |p(a|I,x)-q(a|I,x)|] \\
        \leq &\mathbb{E}_{I,x,a\sim q^\alpha(\cdot|I, x)}[q(a|I,x)] + \text{ECE}^\alpha
    \end{align*}
    On the other hand, we have
    \begin{align*}
        C^\alpha=&\mathbb{E}_{I,x,,a\sim q^\alpha(\cdot|I, x)}[q(a|I,x) \\
        &+ (p(a|I,x)-q(a|I,x))] \\
        \geq &\mathbb{E}_{I,x,a\sim q^\alpha(\cdot|I, x)}[q(a|I,x) 
        \\&- |p(a|I,x)-q(a|I,x)|] \\
        = &\mathbb{E}_{I,x,a\sim q^\alpha(\cdot|I, x)}[q(a|I,x)] - \text{ECE}^\alpha
    \end{align*}
    Similarly, we can have $C^1\leq \mathbb{E}_{I,x}[q(a^1|I,x)] + \text{ECE}^1$ and $C^1\geq \mathbb{E}_{I,x}[q(a^1|I,x)] - \text{ECE}^1$. 
    Applying this to inequality~(\ref{cond:1}), we obtain:
    \begin{align*}
        &C^\alpha \leq \mathbb{E}_{I,x,a\sim q^\alpha(\cdot|I, x)}[q(a|I,x)] + \text{ECE}^\alpha \\ \leq & \mathbb{E}_{I,x}[q(a^1|I,x)] - \text{ECE}^1 \leq C^1
    \end{align*}
    On the other hand, we can also bound $C^\alpha$ as follows:
    \begin{align*}
        &C^\alpha = \mathbb{E}_{I,x,a\sim q^\alpha(\cdot|I, x)}\left[\frac{p(a|I,x)q(a|I,x)}{q(a|I,x)}\right] \\
        = &\mathbb{E}_{I,x,a\sim q^\alpha(\cdot|I, x)}\left[\frac{p(a|I,x)^2 + q(a|I,x)^2}{2q(a|I,x)}\right. \\
        - &\left.\frac{|p(a|I,x) - q(a|I,x)|^2}{2q(a|I,x)}  \right] \\
        \leq & \mathbb{E}_{I,x,a\sim q^\alpha(\cdot|I, x)}\left[\frac{1 + q(a|I,x)^2}{2q(a|I,x)}\right] - \frac{\text{BS}^\alpha}{2}
    \end{align*}
   Combining this bound with the previous bound on $C^1$ using $\text{ECE}^1$, and assuming inequality~(\ref{cond:2}) holds, we get: 
    \begin{align*}
        \,\,&C^\alpha \leq \mathbb{E}_{I,x,a\sim q^\alpha(\cdot|I, x)}\left[\frac{1 + q(a^\alpha|I,x)^2}{2q(a^\alpha|I,x)}\right] - \frac{\text{BS}^\alpha}{2} \\
        \leq\,\,&\mathbb{E}_{I,x}[q(a^1|I,x)] - \text{ECE}^1  \leq C^1
    \end{align*}
    The inequality $C^1\geq C^\alpha$, for all $\alpha$ that gives non-greedy strategy suggests that greedy strategy is optimal.  
\end{proof}

\section{Implementation Details}
\label{sec:implementation_detail}

\subsection{Prompt and Formatting}
\label{sec:appendix_prompt_format}
For each dataset, we provide a global system prompt.
All models use a standardized instruction template with explicit role and output style cues. 
For multiple-choice questions, we append the choices and instruct the model to answer with the option letter only (\emph{e.g.,} \texttt{A/B/C/D}). 
For free-form questions, we instruct the model to answer with a short phrase. 
We disable chain-of-thought style prompting and request only to output the answers without reasoning.

\subsection{Inference setting}
\label{sec:appendix_inference}
Inference is run on a single A6000 with 48GB memory for all models. We cap decoding at 64 tokens for free-form answers and 8 tokens for multiple-choice questions. 

\subsection{Hyperparameters}
\label{sec:appendix_hyperparameters}
We search: $T \in \{0.3, 0.7, 1.0, 1.5, 2.0\}$, $k \in \{20, 50\}$, $p \in \{0.9, 0.95\}$, $p_{base} \in \{0.05, 0.1\}$, $\varepsilon \in \{0.0009, 0.0006\}$, and $\eta \in \{0.0006, 0.0002\}$.
For the Qwen3-VL Thinking series models, we use the officially recommended decoding hyperparameters detailed in Table~\ref{tab:qwen_thinking_params}.
\begin{table}[h]
\centering
\caption{Hyperparameters used for Qwen3-VL Thinking models.}
\begin{tabular}{ll}
\toprule
\textbf{Parameter} & \textbf{Value} \\
\midrule
temperature & 1.0 \\
top-$p$ & 0.95 \\
top-$k$ & 20 \\
\bottomrule
\end{tabular}
\label{tab:qwen_thinking_params}
\end{table}

\subsection{System Prompts}
\label{sec:appendix_sys_prompts}

This section provides the exact system prompts utilized for each VQA benchmark during our evaluation. These prompts were designed to constrain the model's output to the specific format required by each dataset's evaluation script.

\paragraph{BLINK}
For the BLINK benchmark, the following system prompt was used to ensure the model responded with only the correct option letter(s).

\begin{lstlisting}[style=appendixprompt]
You are being evaluated on BLINK. 
Answer with ONLY the option letter(s) for the correct choice (A/B/C/...).
Do not output any words, punctuation, or explanation.
\end{lstlisting}

\paragraph{ChartQA}
For the ChartQA benchmark, we employed a more detailed system prompt to guide the model in generating answers in various formats (\emph{e.g.}, numeric, "Yes/No", multiple-choice).

\begin{lstlisting}[style=appendixprompt]
You are a vision--language model specialized in chart question answering.
Answer with ONLY the final result (no explanation).
Rules:
For numeric answers: use digits and include the unit exactly as shown in the chart (e.g., %, <deg>C, $, K).
For yes/no: reply "Yes" or "No" only.
For multiple-choice: output the option letter(s) only (e.g., "B" or "A,C") with no spaces.
If the question cannot be answered from the chart, reply exactly: "unanswerable".
    }%
\end{lstlisting}

\paragraph{MMMU}
No system prompt was used for the MMMU benchmark. The evaluations were without any system prompt being passed to the models.

\subsection{Metrics}
\label{metric}
\paragraph{Hallucination Benchmark.} The Answer Score is a GPT-4o-assigned rating on a 0--6 scale that jointly reflects response informativeness and hallucination status, following the evaluation protocol of~\citet{sun2024aligning}. The Hallucination Rate is the fraction of responses assigned a hallucination-containing rating under the same protocol.  

The detailed evaluation prompt is as follows:

\begin{lstlisting}[style=appendixprompt]
Please act as an impartial and objective judge and evaluate the quality of the response provided by a Large Multimodal Model (LMM) to the user question. Your evaluation should be mainly based on whether the response is informative, and whether the response contains any hallucination. Hallucination, in this context, refers to a situation where the LMM generates a response that includes information not present or implied in the image or previous conversation. A hallucination could be a false claim about an object, action, emotion, or any other detail that is not grounded in the image.

For clarity, consider these examples:
...

With these examples in mind, please help me evaluate whether the response by the LMM is informative, and whether hallucination exists in it, based on the comparison between the LMM's response and the factual information provided in the image contents, question, and the standard human-generated answer below.

Please note that the standard human-generated answer may only contain factual information but may not give a detailed analysis. Also, the standard human-generated answer may not be completely comprehensive in describing all the objects and their attributes, so please be a bit more cautious during evalutation. LMM's detailed analysis or reasoning should be encouraged.

To evaluate the LMM responses, first, begin your evaluation by providing a short explanation. Second, after providing your explanation, you must rate the response by choosing from the following options:

- Rating: 6, very informative with good analysis or reasoning, no hallucination
- Rating: 5, very informative, no hallucination
- Rating: 4, somewhat informative, no hallucination
- Rating: 3, not informative, no hallucination
- Rating: 2, very informative, with hallucination
- Rating: 1, somewhat informative, with hallucination
- Rating: 0, not informative, with hallucination
    }%
\end{lstlisting}

\paragraph{Open-ended Image Captioning.}
The Average Score is a GPT-4o-as-a-judge measurement obtained via pairwise battles between the evaluated model and three reference captioners (GPT-4o, CogVLM2, and MiniCPM-V) over 600 images, computed as the mean of the three per-baseline net win rates (in percent, range $[-100, 100]$).
The Average Length is the mean length of the generated captions, reported alongside the Average Score to contextualize verbosity.
Both metrics follow the CapArena-Auto evaluation protocol established in prior work~\cite{cheng2025caparena}.

\section{Additional Results}
\subsection{Additional results on the ChartQA dataset}
\label{sec:appendix_chartqa}

\paragraph{Pre-hoc Decoding Strategy Selection.} To evaluate whether Theorem~\ref{thr:cal} can guide decoding-strategy selection before observing target-set performance, we conduct a pre-hoc experiment using Qwen2.5-VL (3B) on ChartQA. We randomly sample 200 questions from the ChartQA training split as a held-out development set and estimate $G_1^k$ for top-$k$ sampling. All main results are evaluated on the official validation split, so no validation examples are used when computing this diagnostic.

\begin{table}[h]
\centering
\small
\begin{tabular}{cc}
\toprule
\textbf{$k$} & \textbf{Estimated $G_1^k$} \\
\midrule
\textcolor{gray}{1} & \textcolor{gray}{-0.5455} \\
\midrule
\rowcolor{gray!10}
2 & 0.0470 \\
3 & 0.1856 \\
\rowcolor{gray!10}
4 & 0.1965 \\
5 & 0.2752 \\
\bottomrule
\end{tabular}
\caption{Pre-hoc estimates of $G_1^k$ on a held-out development set of 200 ChartQA training examples using Qwen2.5-VL (3B). Theorem~\ref{thr:cal} applies to the non-greedy alternatives $k>1$; $k=1$ is shown only for completeness.}
\label{tab:prehoc_g1}
\end{table}

For every evaluated non-greedy alternative $k\in\{2,3,4,5\}$, we obtain $G_1^k>0$. Condition~\circled{1} of Theorem~\ref{thr:cal} therefore predicts, using only the held-out development set, that greedy decoding should outperform these top-$k$ alternatives. This prediction is confirmed on the disjoint ChartQA validation set, where greedy decoding achieves the best accuracy (Table~\ref{tab:qwen3-3b-bytemp}). This experiment demonstrates how the theorem can be used as a \emph{pre-hoc diagnostic} rather than only as a post-hoc explanation.

\paragraph{Rank-wise Analysis within the Truncated Candidate Set} To characterize the answer distribution retained after truncation, we conduct a rank-wise analysis on ChartQA using Qwen2.5-VL (3B). For each of the 1,920 examples, we consider the top-$5$ candidate answers and report two quantities at each rank: (i) the mean model confidence after renormalizing probability mass within the top-$5$ candidate set, and (ii) the empirical probability that the candidate at that rank is the ground-truth answer.

\begin{table}[h]
\centering
\small
\begin{tabular}{ccc}
\toprule
\textbf{$k$} & \textbf{Mean Norm. Confidence} & \textbf{Empirical Correctness} \\
\midrule
1 & 0.8756 & 0.8617 \\
\rowcolor{gray!10}
2 & 0.0779 & 0.0739 \\
3 & 0.0251 & 0.0230 \\
\rowcolor{gray!10}
4 & 0.0132 & 0.0119 \\
5 & 0.0081 & 0.0072 \\
\bottomrule
\end{tabular}
\caption{Rank-wise distribution within the top-$5$ truncated candidate set on ChartQA using Qwen2.5-VL (3B), with $n=1{,}920$ examples.}
\label{tab:truncated_rank}
\end{table}

Correctness is overwhelmingly concentrated on the top-$1$ candidate and decreases sharply with rank, closely tracking the model's confidence distribution. Thus, even after the low-probability tail is truncated, stochastic decoding assigns nonzero probability to retained lower-ranked candidates that are substantially less likely to be correct. This complements the rank-conditioned calibration results in Figure~\ref{fig:ece-topk} and the failure analysis in Figure~\ref{fig:failure_mode}.

\subsection{Additional results on the MMLU dataset}
\label{sec:appendix_mmlu}
\paragraph{Empirical evidence on the MMLU dataset.} Fig~\ref{fig:mmlu} shows that the $G^k_1$ remains positive for all $k > 1$ on the MMLU dataset using Qwen2.5-VL (3B).
Theorem~\ref{thr:cal} proves that in this case, greedy decoding is optimal.

\begin{figure}[t]
\centering
\includegraphics[width=0.9\linewidth]{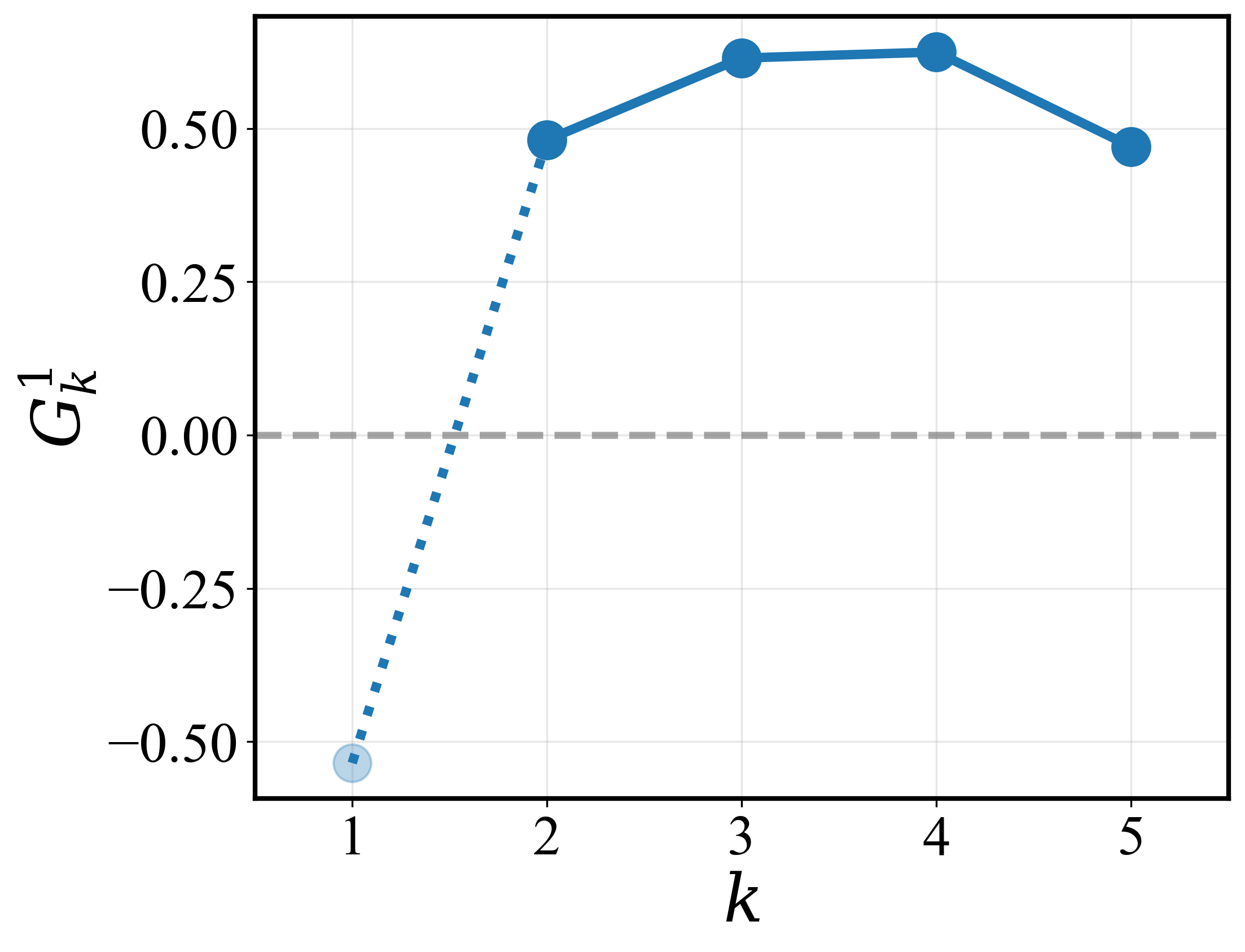}
\caption{Empirical evidence ($G^k_1$ vs. $k$) for Theorem~\ref{thr:cal} on MMLU using Qwen2.5-VL (3B)
}
\label{fig:mmlu}
\end{figure}

\paragraph{Results with different temperatures.}
Table~\ref{tab:mmlu-appendix} provides additional results using Qwen2.5-VL (3B) across different temperature settings. The results demonstrate that although lowering the temperature slightly mitigates the negative effects of stochasticity, greedy decoding consistently outperforms stochastic sampling methods.

\begin{table}[hbpt]
\centering
\caption{Accuracy (in \%) of different decoding/sampling strategies on MMLU benchmark using Qwen2.5-VL (3B)..}
\label{tab:mmlu-appendix}
\small
\begin{tabular}{lcc}
\toprule
& \multicolumn{2}{c}{\textbf{MMLU}} \\
\cmidrule(lr){2-3}
\textbf{Method} & $\tau{=}0.7$ & $\tau{=}2.0$ \\
\midrule
Temp' Only & 72.86 & 71.93 \\
\rowcolor{gray!10}
Top-$k$ & 71.23 & 71.23 \\
Top-$p$ (nucleus) & 71.23 & 69.47 \\
\rowcolor{gray!10}
Min-$p$ & 71.58 & 70.18 \\
$\varepsilon$-Sampling & 71.23 & 68.77 \\
\rowcolor{gray!10}
$\eta$-Sampling & 70.36 & 71.79 \\
\midrule
Greedy & \multicolumn{2}{c}{\textbf{73.21}} \\
\bottomrule
\end{tabular}
\end{table}

\subsection{Detailed results on the POPE Benchmark}
\label{hall-pope}
Table~\ref{tab:acc_f1_pope} shows average Accuracy and F1 scores on the POPE benchmark~\cite{rohrbach2018object} using greedy decoding and stochastic sampling (nucleus) strategies. Greedy (vanilla) decoding consistently outperforms nucleus sampling (vanilla) across all four evaluated models on both metrics~\cite{li2025hidden}. The performance gap is particularly pronounced in models like MiniGPT-4~\cite{zhu2023minigpt}, where nucleus sampling suffers a severe degradation in both Accuracy (-16.2\%) and F1 score (-14.78\%). These results are consistent with our findings on the MM-HallBench dataset in the main text, providing additional empirical evidence that stochastic sampling expands the candidate set into the low-probability tail, frequently associated with ambiguous visual evidence and therefore at a higher risk of hallucination.
\begin{table}[h]
\centering
\caption{Comparison of average Accuracy and F1 scores, averaged across
random, popular, and adversarial object splits, between Greedy and Nucleus sampling across different models on POPE benchmark. Results are adopted from~\citet{li2025hidden}. Best results are in \textbf{bold}.}
\label{tab:acc_f1_pope}
\scriptsize
\setlength{\tabcolsep}{5pt}
\begin{tabular}{lcccc}
\toprule
& \multicolumn{2}{c}{\textbf{Avg. Accuracy}} & \multicolumn{2}{c}{\textbf{Avg. F1}} \\
\cmidrule(lr){2-3} \cmidrule(lr){4-5}
\multirow{-2}{*}{\textbf{Model}} & Greedy & Nucleus & Greedy & Nucleus \\
\midrule
LLaVA-1.5~\cite{Liu2023LLaVA} & \textbf{84.79} & 81.26 & \textbf{85.61} & 82.40 \\
\rowcolor{gray!10}
MiniGPT-4~\cite{zhu2023minigpt} & \textbf{76.76} & 60.56 & \textbf{76.82} & 62.04 \\
Shikra~\cite{chen2023shikra} & \textbf{81.32} & 78.94 & \textbf{82.01} & 80.18 \\
\rowcolor{gray!10}
InstructBLIP~\cite{dai2023instructblip} & \textbf{84.36} & 78.83 & \textbf{84.64} & 79.74 \\
\bottomrule
\end{tabular}
\end{table}

\subsection{Greedy Decoding for Reasoning Model}
\label{sec:gdrm_30b}
To evaluate the scalability of Greedy Decoding for Reasoning Model (GDRM), we conduct additional experiments on ChartQA using Qwen3-VL-30B-A3B-Thinking, a larger Mixture-of-Experts reasoning model comprising 30B total parameters (3B active). As shown in Table~\ref{tab:gdrm_30b_accuracy}, GDRM consistently outperforms all stochastic sampling baselines, suggesting that its effectiveness scales to larger-scale models.

\begin{table}[h]
\centering
\small
\setlength{\tabcolsep}{5pt}
\caption{Comparison of Baseline and GDRM accuracy across different sampling strategies for Qwen3-VL-30B-A3B-Thinking.}
\label{tab:gdrm_30b_accuracy}
\begin{tabular}{lcc}
\toprule
& \multicolumn{2}{c}{\textbf{ChartQA}} \\ 
\cmidrule(lr){2-3}
\textbf{Method} & Baseline & GDRM \\
\midrule
Top-$k$ & 84.68 & 85.40 \deltaacc{+0.72} \\
\rowcolor{gray!10}
Top-$p$ (nucleus) & 84.15 & 84.70 \deltaacc{+0.55} \\
$\varepsilon$-Sampling & 84.22 & 84.86 \deltaacc{+0.64} \\
\bottomrule
\end{tabular}
\end{table}

\subsection{Detailed results with different random seeds}
\label{sec:appendix_seeds}
We list the evaluation results with different random seeds using Qwen2.5-VL (3B) from Table~\ref{tab:mmmu-seed42} to Table~\ref{tab:blink-seed789}.

\begin{table*}[htbp]
\centering
\caption{Accuracy (in \%) of different decoding/sampling strategies on MMMU benchmark (seed=42) using Qwen2.5-VL (3B).}
\label{tab:mmmu-seed42}
\small
\setlength{\tabcolsep}{6pt}
\begin{tabular}{lccccc}
\toprule
\textbf{Method} & $\tau{=}0.3$ & $\tau{=}0.7$ & $\tau{=}1.0$ & $\tau{=}1.5$ & $\tau{=}2.0$ \\
\midrule
Top-$k$ & 45.33 & 44.22 & 42.11 & 37.33 & 33.89 \\
\rowcolor{gray!10}
Top-$p$ (nucleus) & 45.44 & 42.44 & 40.33 & 35.44 & 28.67 \\
Min-$p$ & 45.44 & 42.56 & 39.89 & 35.22 & 27.56 \\
\rowcolor{gray!10}
$\varepsilon$-Sampling & 45.44 & 43.00 & 41.44 & 38.56 & 34.56 \\
$\eta$-Sampling & 45.44 & 44.22 & 40.33 & 38.22 & 29.22 \\
\midrule
Greedy& \multicolumn{5}{c}{45.44} \\
\bottomrule
\end{tabular}
\end{table*}

\begin{table*}[htbp]
\centering
\caption{Accuracy (in \%) of different decoding/sampling strategies on MMMU benchmark (seed=123) using Qwen2.5-VL (3B).}
\label{tab:mmmu-seed123}
\small
\setlength{\tabcolsep}{6pt}
\begin{tabular}{lccccc}
\toprule
\textbf{Method} & $\tau{=}0.3$ & $\tau{=}0.7$ & $\tau{=}1.0$ & $\tau{=}1.5$ & $\tau{=}2.0$ \\
\midrule
Top-$k$ & 45.33 & 41.11 & 41.22 & 35.22 & 32.78 \\
\rowcolor{gray!10}
Top-$p$ (nucleus) & 45.00 & 43.11 & 42.89 & 37.22 & 30.56 \\
Min-$p$ & 45.33 & 41.00 & 39.33 & 36.56 & 29.78 \\
\rowcolor{gray!10}
$\varepsilon$-Sampling & 45.33 & 40.78 & 38.56 & 34.89 & 33.33 \\
$\eta$-Sampling & 45.44 & 40.89 & 39.89 & 34.67 & 29.67 \\
\midrule
Greedy& \multicolumn{5}{c}{45.44} \\
\bottomrule
\end{tabular}
\end{table*}

\begin{table*}[htbp]
\centering
\caption{Accuracy (in \%) of different decoding/sampling strategies on MMMU benchmark (seed=456) using Qwen2.5-VL (3B).}
\label{tab:mmmu-seed456}
\small
\setlength{\tabcolsep}{6pt}
\begin{tabular}{lccccc}
\toprule
\textbf{Method} & $\tau{=}0.3$ & $\tau{=}0.7$ & $\tau{=}1.0$ & $\tau{=}1.5$ & $\tau{=}2.0$ \\
\midrule
Top-$k$ & 43.56 & 42.56 & 40.00 & 35.78 & 31.89 \\
\rowcolor{gray!10}
Top-$p$ (nucleus) & 45.44 & 42.22 & 41.33 & 35.78 & 27.11 \\
Min-$p$ & 43.56 & 44.78 & 41.56 & 36.11 & 29.11 \\
\rowcolor{gray!10}
$\varepsilon$-Sampling & 43.56 & 44.89 & 41.33 & 35.22 & 35.44 \\
$\eta$-Sampling & 43.56 & 44.67 & 41.33 & 37.67 & 27.56 \\
\midrule
Greedy& \multicolumn{5}{c}{45.44} \\
\bottomrule
\end{tabular}
\end{table*}

\begin{table*}[htbp]
\centering
\caption{Accuracy (in \%) of different decoding/sampling strategies on MMMU benchmark (seed=789) using Qwen2.5-VL (3B).}
\label{tab:mmmu-seed789}
\small
\setlength{\tabcolsep}{6pt}
\begin{tabular}{lccccc}
\toprule
\textbf{Method} & $\tau{=}0.3$ & $\tau{=}0.7$ & $\tau{=}1.0$ & $\tau{=}1.5$ & $\tau{=}2.0$ \\
\midrule
Top-$k$ & 45.44 & 42.22 & 41.00 & 36.22 & 35.11 \\
\rowcolor{gray!10}
Top-$p$ (nucleus) & 45.33 & 43.00 & 40.67 & 35.67 & 26.44 \\
Min-$p$ & 45.44 & 42.22 & 40.00 & 34.89 & 28.11 \\
\rowcolor{gray!10}
$\varepsilon$-Sampling & 45.44 & 41.11 & 40.67 & 39.11 & 36.78 \\
$\eta$-Sampling & 45.33 & 42.22 & 40.22 & 35.67 & 25.89 \\
\midrule
Greedy& \multicolumn{5}{c}{45.44} \\
\bottomrule
\end{tabular}
\end{table*}

\begin{table*}[htbp]
\centering
\caption{Accuracy (in \%) of different decoding/sampling strategies on ChartQA benchmark (seed=42) using Qwen2.5-VL (3B).}
\label{tab:chartqa-seed42}
\small
\setlength{\tabcolsep}{6pt}
\begin{tabular}{lccccc}
\toprule
\textbf{Method} & $\tau{=}0.3$ & $\tau{=}0.7$ & $\tau{=}1.0$ & $\tau{=}1.5$ & $\tau{=}2.0$ \\
\midrule
Top-$k$ & 82.29 & 79.53 & 76.09 & 65.73 & 51.98 \\
\rowcolor{gray!10}
Top-$p$ (nucleus) & 82.34 & 81.72 & 79.11 & 63.75 & 48.39 \\
Min-$p$ & 82.29 & 79.53 & 76.72 & 52.08 & 44.80 \\
\rowcolor{gray!10}
$\varepsilon$-Sampling & 81.93 & 79.48 & 76.88 & 67.14 & 54.84 \\
$\eta$-Sampling & 82.29 & 79.53 & 76.82 & 65.94 & 48.80 \\
\midrule
Greedy& \multicolumn{5}{c}{83.12} \\
\bottomrule
\end{tabular}
\end{table*}

\begin{table*}[htbp]
\centering
\caption{Accuracy (in \%) of different decoding/sampling strategies on ChartQA benchmark (seed=123) using Qwen2.5-VL (3B).}
\label{tab:chartqa-seed123}
\small
\setlength{\tabcolsep}{6pt}
\begin{tabular}{lccccc}
\toprule
\textbf{Method} & $\tau{=}0.3$ & $\tau{=}0.7$ & $\tau{=}1.0$ & $\tau{=}1.5$ & $\tau{=}2.0$ \\
\midrule
Top-$k$ & 82.50 & 79.38 & 75.78 & 67.14 & 51.41 \\
\rowcolor{gray!10}
Top-$p$ (nucleus) & 82.40 & 80.68 & 78.33 & 62.60 & 46.88 \\
Min-$p$ & 82.50 & 79.43 & 76.20 & 53.02 & 42.39 \\
\rowcolor{gray!10}
$\varepsilon$-Sampling & 82.50 & 79.84 & 75.99 & 66.20 & 51.35 \\
$\eta$-Sampling & 82.50 & 79.58 & 75.89 & 63.91 & 48.39 \\
\midrule
Greedy& \multicolumn{5}{c}{83.12} \\
\bottomrule
\end{tabular}
\end{table*}

\begin{table*}[htbp]
\centering
\caption{Accuracy (in \%) of different decoding/sampling strategies on ChartQA benchmark (seed=456) using Qwen2.5-VL (3B).}
\label{tab:chartqa-seed456}
\small
\setlength{\tabcolsep}{6pt}
\begin{tabular}{lccccc}
\toprule
\textbf{Method} & $\tau{=}0.3$ & $\tau{=}0.7$ & $\tau{=}1.0$ & $\tau{=}1.5$ & $\tau{=}2.0$ \\
\midrule
Top-$k$ & 82.60 & 79.38 & 76.41 & 66.88 & 52.40 \\
\rowcolor{gray!10}
Top-$p$ (nucleus) & 82.08 & 81.04 & 79.43 & 62.76 & 46.09 \\
Min-$p$ & 82.60 & 79.38 & 75.57 & 51.98 & 47.97 \\
\rowcolor{gray!10}
$\varepsilon$-Sampling & 82.60 & 80.16 & 76.20 & 67.81 & 52.19 \\
$\eta$-Sampling & 82.60 & 79.38 & 76.77 & 64.95 & 47.97 \\
\midrule
Greedy& \multicolumn{5}{c}{83.12} \\
\bottomrule
\end{tabular}
\end{table*}

\begin{table*}[htbp]
\centering
\caption{Accuracy (in \%) of different decoding/sampling strategies on ChartQA benchmark (seed=789) using Qwen2.5-VL (3B).}
\label{tab:chartqa-seed789}
\small
\setlength{\tabcolsep}{6pt}
\begin{tabular}{lccccc}
\toprule
\textbf{Method} & $\tau{=}0.3$ & $\tau{=}0.7$ & $\tau{=}1.0$ & $\tau{=}1.5$ & $\tau{=}2.0$ \\
\midrule
Top-$k$ & 82.24 & 80.16 & 76.15 & 65.42 & 51.77 \\
\rowcolor{gray!10}
Top-$p$ (nucleus) & 82.97 & 81.09 & 78.85 & 64.17 & 49.48 \\
Min-$p$ & 82.24 & 80.16 & 75.89 & 52.19 & 47.50 \\
\rowcolor{gray!10}
$\varepsilon$-Sampling & 82.24 & 80.16 & 77.03 & 66.46 & 51.46 \\
$\eta$-Sampling & 82.24 & 80.16 & 76.35 & 62.45 & 47.50 \\
\midrule
Greedy& \multicolumn{5}{c}{83.12} \\
\bottomrule
\end{tabular}
\end{table*}

\begin{table*}[htbp]
\centering
\caption{Accuracy (in \%) of different decoding/sampling strategies on BLINK benchmark (seed=42) using Qwen2.5-VL (3B).}
\label{tab:blink-seed42}
\small
\setlength{\tabcolsep}{6pt}
\begin{tabular}{lccccc}
\toprule
\textbf{Method} & $\tau{=}0.3$ & $\tau{=}0.7$ & $\tau{=}1.0$ & $\tau{=}1.5$ & $\tau{=}2.0$ \\
\midrule
Top-$k$ & 34.68 & 32.75 & 32.26 & 29.33 & 27.13 \\
\rowcolor{gray!10}
Top-$p$ (nucleus) & 34.24 & 29.71 & 31.44 & 28.60 & 26.60 \\
Min-$p$ & 35.32 & 33.07 & 30.76 & 29.97 & 28.63 \\
\rowcolor{gray!10}
$\varepsilon$-Sampling & 34.28 & 32.27 & 30.62 & 30.74 & 26.59 \\
$\eta$-Sampling & 33.13 & 32.07 & 31.16 & 27.63 & 26.10 \\
\midrule
Greedy& \multicolumn{5}{c}{35.32} \\
\bottomrule
\end{tabular}
\end{table*}

\begin{table*}[htbp]
\centering
\caption{Accuracy (in \%) of different decoding/sampling strategies on BLINK benchmark (seed=123) using Qwen2.5-VL (3B).}
\label{tab:blink-seed123}
\small
\setlength{\tabcolsep}{6pt}
\begin{tabular}{lccccc}
\toprule
\textbf{Method} & $\tau{=}0.3$ & $\tau{=}0.7$ & $\tau{=}1.0$ & $\tau{=}1.5$ & $\tau{=}2.0$ \\
\midrule
Top-$k$ & 34.68 & 33.07 & 30.40 & 29.00 & 25.71 \\
\rowcolor{gray!10}
Top-$p$ (nucleus) & 34.61 & 32.37 & 30.82 & 29.72 & 27.81 \\
Min-$p$ & 35.32 & 33.40 & 30.41 & 30.76 & 31.37 \\
\rowcolor{gray!10}
$\varepsilon$-Sampling & 34.19 & 31.55 & 30.66 & 29.49 & 25.99 \\
$\eta$-Sampling & 34.95 & 30.10 & 30.70 & 27.27 & 24.25 \\
\midrule
Greedy& \multicolumn{5}{c}{35.32} \\
\bottomrule
\end{tabular}
\end{table*}

\begin{table*}[htbp]
\centering
\caption{Accuracy (in \%) of different decoding/sampling strategies on BLINK benchmark (seed=456) using Qwen2.5-VL (3B).}
\label{tab:blink-seed456}
\small
\setlength{\tabcolsep}{6pt}
\begin{tabular}{lccccc}
\toprule
\textbf{Method} & $\tau{=}0.3$ & $\tau{=}0.7$ & $\tau{=}1.0$ & $\tau{=}1.5$ & $\tau{=}2.0$ \\
\midrule
Top-$k$ & 34.38 & 31.01 & 30.31 & 29.72 & 26.17 \\
\rowcolor{gray!10}
Top-$p$ (nucleus) & 34.29 & 32.42 & 32.58 & 28.34 & 26.76 \\
Min-$p$ & 35.32 & 32.94 & 32.08 & 29.55 & 30.72 \\
\rowcolor{gray!10}
$\varepsilon$-Sampling & 34.04 & 30.67 & 31.88 & 28.43 & 26.62 \\
$\eta$-Sampling & 34.98 & 30.58 & 31.34 & 28.74 & 26.37 \\
\midrule
Greedy& \multicolumn{5}{c}{35.32} \\
\bottomrule
\end{tabular}
\end{table*}

\begin{table*}[htbp]
\centering
\caption{Accuracy (in \%) of different decoding/sampling strategies on BLINK benchmark (seed=789) using Qwen2.5-VL (3B).}
\label{tab:blink-seed789}
\small
\setlength{\tabcolsep}{6pt}
\begin{tabular}{lccccc}
\toprule
\textbf{Method} & $\tau{=}0.3$ & $\tau{=}0.7$ & $\tau{=}1.0$ & $\tau{=}1.5$ & $\tau{=}2.0$ \\
\midrule
Top-$k$ & 35.32 & 33.85 & 30.31 & 28.99 & 26.39 \\
\rowcolor{gray!10}
Top-$p$ (nucleus) & 34.98 & 31.94 & 31.85 & 30.96 & 27.39 \\
Min-$p$ & 35.32 & 33.78 & 30.67 & 31.54 & 29.00 \\
\rowcolor{gray!10}
$\varepsilon$-Sampling & 34.15 & 29.87 & 31.60 & 28.22 & 25.79 \\
$\eta$-Sampling & 34.56 & 31.61 & 30.96 & 27.46 & 25.31 \\
\midrule
Greedy& \multicolumn{5}{c}{35.32} \\
\bottomrule
\end{tabular}
\end{table*}

\begin{figure}[H]
    \centering
    \includegraphics[width=\linewidth]{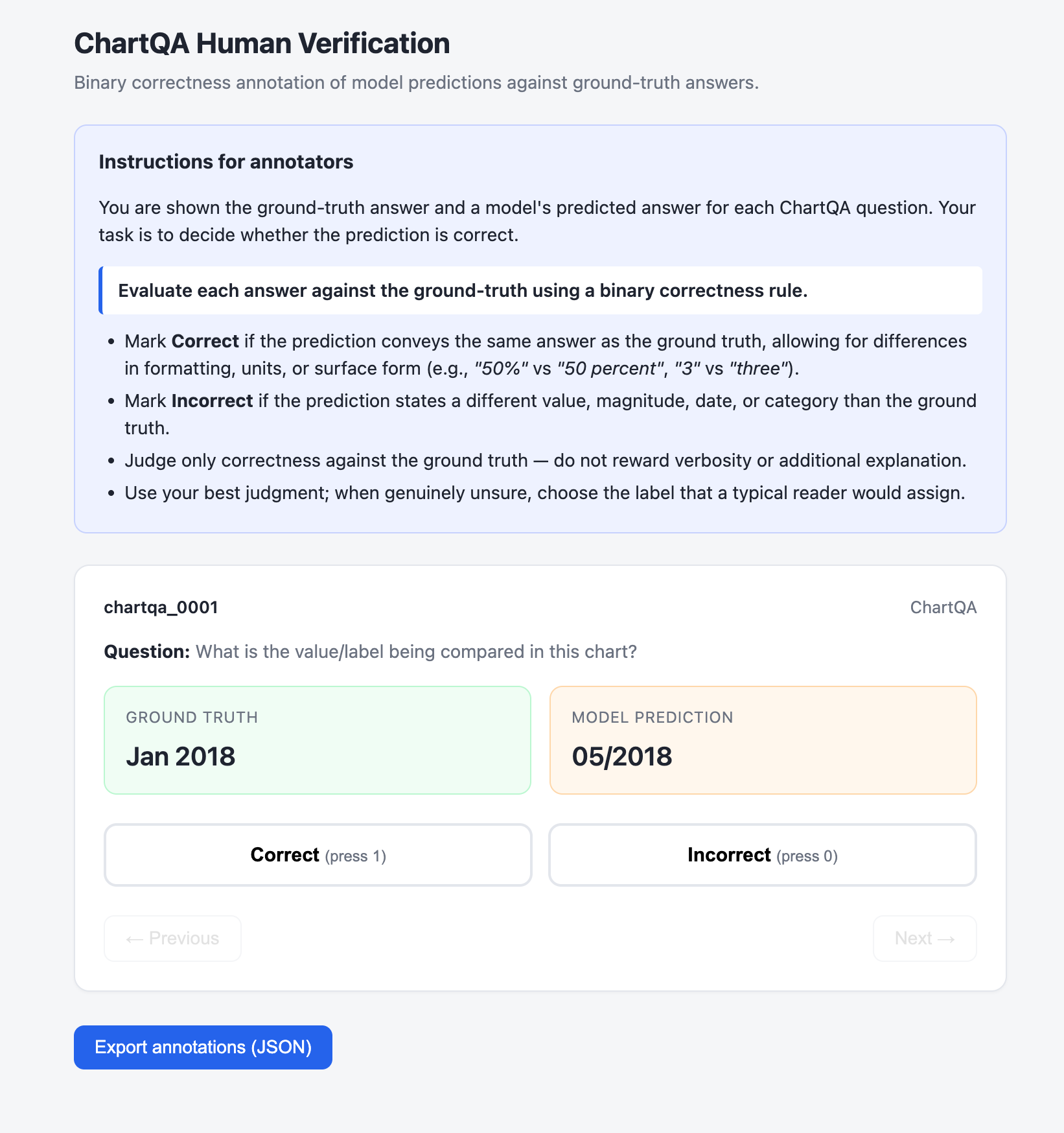}
    \caption{The annotation interface used for the human verification study on ChartQA.
    Annotators are shown the ground-truth answer and the model prediction, follow the
    binary correctness rule stated in the instructions, and record a Correct/Incorrect
    judgment.}
    \label{fig:human_annotation}
\end{figure}

\section{Human Verification Interface}
\label{sec:human_eval}
To assess whether our automatic string-matching evaluation reflects true answer correctness, we conducted a human verification study on ChartQA. For each selected question, we randomly selected one answer from the five stochastic sampling methods, so that each question contributed one sampled output without over-representing any individual decoding strategy. Two researchers independently judged each selected prediction against the ground-truth answer using a binary correctness rule (correct vs.\ incorrect), through the lightweight annotation interface shown in Figure~\ref{fig:human_annotation}. For each instance, the interface presents the ground-truth answer alongside the model prediction together with the full annotation instructions, and records a binary correctness judgment. The verbatim instruction shown to annotators is: \emph{``Evaluate each answer against the ground-truth using a binary correctness rule.''} Annotators were instructed to mark a prediction \emph{Correct} if it conveys the same answer as the ground truth up to differences in formatting, units, or surface form, and \emph{Incorrect} if it states a different value, magnitude, date, or category. Agreement between the human annotations and the string-match labels is high (Cohen's $\kappa = 0.95$), indicating that our results are not driven by string-matching artifacts.

\subsection{Annotator Recruitment and Compensation}
\label{sec:annot_recruit}
The human verification was carried out by two researchers from the project team, who
participated voluntarily. Because the task was a brief internal validation conducted by
members of the research team rather than recruited external workers, no monetary
compensation was provided or required.

\subsection{Annotator Consent}
\label{sec:annot_consent}
Prior to annotation, both researchers were shown the following instruction, which also
served as the consent statement: \emph{``You are asked to voluntarily judge
model-generated answers against the public ChartQA ground-truth labels using a binary
correctness rule. The task involves only model outputs and publicly available benchmark
data, with no personal or sensitive information. Your anonymized binary judgments will
be aggregated solely to measure the inter-annotator agreement reported in this paper. By
proceeding, you consent to this use.''} Both annotators reviewed this statement and
consented before participating.

\section{Use of AI Assistants in Research}
\label{sec:ai_assistants}
In our study, generative AI assistants are used sparingly and in accordance with the guidelines on ACL’s Policy on AI Writing Assistance. We utilize ChatGPT for basic paraphrasing and grammar checks. These tools are applied minimally to ensure the authenticity of our work and to adhere strictly to the regulatory standards set by ACL. Our use of these AI tools is focused, responsible, and aimed at supplementing rather than replacing human input and expertise in our research.

\section{Dataset Licensing and Intended Use}
\label{sec:licensing}
\subsection{Dataset License}
BLINK, ChartQA, MMMU, MMHal-Bench, MMLU, and CapArena are intended for research usage.
\paragraph{BLINK} It has an Apache license 2.0, allowing research usage.

\paragraph{ChartQA} It has a GNU General Public License v3.0, allowing research usage.

\paragraph{MMMU} It has an Apache license 2.0, allowing research usage.

\paragraph{MMHal-Bench} It has an Apache license 2.0, allowing research usage.

\paragraph{MMLU} It has an MIT license, allowing research usage.

\paragraph{CapArena} It is publicly released for academic research use~\citep{cheng2025caparena}.

\subsection{Intended Use}
Our use of the existing artifacts strictly comply with their intended purpose as benchmarks for research usage. We do not introduce any new dataset or artifact in this work.

\subsection{Privacy and Offensive Content}
\label{sec:privacy}
We use only established, publicly available benchmarks (MMMU, ChartQA, BLINK,
MM-HallBench, MMLU, and CapArena) and neither collect, scrape, nor release any new
data. These benchmarks are built from charts, diagrams, documents, and natural images
paired with factual question--answer items rather than personal records, and their
content was curated and vetted by the original dataset creators. To confirm this for
our setting, we manually inspected samples during the human verification study
(Appendix~\ref{sec:human_eval}) and the error analysis; we did not encounter personally
identifying information or offensive content. As we introduce no new data and use each
benchmark unmodified under its research license, any anonymization or content filtering
is inherited from the original releases, and no additional anonymization was required
on our part.

\section{Potential Risks}
\label{sec:risks}
This work is foundational research on inference-time decoding strategies for VQA; it
introduces no new model, dataset, or deployable system, and evaluates only publicly
released MLLMs on public benchmarks. We nonetheless note several potential risks.
First, greedy (argmax) decoding deterministically surfaces the model's most probable
answer and can therefore reflect or amplify biases present in the underlying model and
its training data; calibration-aware selection mitigates but does not eliminate this,
and we recommend pairing it with calibration checks or abstention in sensitive
applications. Second, reducing output stochasticity lowers answer diversity, which is
desirable for closed-ended VQA but may be undesirable for open-ended or creative
generation; our recommendations are scoped to closed-ended VQA and should not be
transferred uncritically to such settings. Third, although greedy decoding reduces
hallucination-like errors in our experiments, it does not remove them, and a single
deterministic answer may convey unwarranted confidence; downstream users should not
treat model outputs as authoritative without verification. Finally, all benchmarks
used are English-only and research-licensed, so our conclusions may not transfer to
other languages or domains. We are not aware of any direct path to malicious use
beyond those already inherent to the public MLLMs we evaluate.

\end{document}